\newtheorem{theorem}{Theorem}[section] 
\newtheorem{lemma}[theorem]{Lemma}
\newtheorem{definition}{Definition}
\long\def\remove#1{}
\newcommand{\NNSteiner}   {{NN-Steiner}}
\newcommand {\mm}[1] {\ifmmode{#1}\else{\mbox{\(#1\)}}\fi}
\newcommand{\denselist}{\itemsep 0pt\parsep=1pt\partopsep 0pt}
\newcommand{\reals}   {{\mathbb{R}}}
\newcommand{\RSMT}    {{RSMT}\xspace}
\newcommand{\eps}     {{\varepsilon}}
\newcommand{\AroraAlg}  {{Arora's PTAS}}
\newcommand{\acell}    {{\mathsf{A}}}
\newcommand{\bcell}    {{\mathsf{B}}}
\newcommand{\rcell}    {{\mathsf{R}}}
\newcommand{\acost}    {{\mathsf{cost}}}
\newcommand{\costvec} {{\vec{\mathsf{C}}}}
\newcommand{\Nbase}    {\mathsf{NN_{base}}}
\newcommand{\NDP}       {\mathsf{NN_{DP}}}
\newcommand{\Ntop}      {\mathsf{NN_{top}}}
\newcommand{\Nretrieve} {\mathsf{NN_{retrieve}}}
\newcommand{\GNbase}    {\mathsf{NN_{base}}}
\newcommand{\GNDP}       {\mathsf{NN_{DP}}}
\newcommand{\GNtop}      {\mathsf{NN_{top}}}
\newcommand{\GNretrieve} {\mathsf{NN_{retrieve}}}
\newcommand{\myparagraph}[1]  { \textit{#1.}}
\newcommand{\myk}        {{\mathsf{k}_b}}
\newcommand{\myd}       {{\mathsf{d}_c}}
\newcommand{\II}    
{{\mathrm{\uppercase\expandafter{\romannumeral2}}}}
\newcommand{\new}[1]		{#1}
\title{NN-Steiner: A Mixed Neural-algorithmic Approach for the Rectilinear Steiner Minimum Tree Problem}
\author{
  Andrew B.\ Kahng\textsuperscript{\rm 1, 2}, Robert R.\ Nerem\textsuperscript{\rm 3}, Yusu Wang\textsuperscript{\rm 3}, and  Chien-Yi Yang\textsuperscript{\rm 2}
}
\begin{document}

\maketitle
\frenchspacing
\begin{abstract}
Recent years have witnessed rapid advances in the use of neural networks to solve combinatorial optimization problems. Nevertheless, designing the ``right'' neural model that can effectively handle a given optimization problem can be challenging, and often there is no theoretical understanding or justification of the resulting neural model. 
In this paper, we focus on the rectilinear Steiner minimum tree (RSMT) problem, which is of critical importance in IC layout design and as a result has attracted numerous heuristic approaches in the VLSI literature. Our contributions are two-fold. On the methodology front, we propose \NNSteiner{}, which is a novel {\bf mixed neural-algorithmic framework} for computing RSMTs that leverages the celebrated PTAS algorithmic framework of Arora to solve this problem (and other geometric optimization problems). Our \NNSteiner{} replaces key algorithmic components within Arora's PTAS by suitable neural components. In particular, \NNSteiner{} only needs {\bf four} neural network (NN) components that are called repeatedly within an algorithmic framework. Crucially, each of the four NN components is only of {\bf bounded size independent of input size}, and thus easy to train. Furthermore, as the NN component is learning a generic algorithmic step, once learned, the resulting mixed neural-algorithmic framework generalizes to much larger instances not seen in training. Our \NNSteiner{}, to our best knowledge, is the first neural architecture
of bounded size that has capacity to approximately solve \RSMT{} (and variants). 
On the empirical front, we show how \NNSteiner{} can be implemented and demonstrate the effectiveness of our resulting approach, especially in terms of generalization, by comparing with state-of-the-art methods (both neural and non-neural based). 

\end{abstract}
\interfootnotelinepenalty=10000
\section{Introduction}
Given a set of points $V$ in $\reals^d$, a Steiner tree spanning $V$ 
is a tree whose vertex set is $V$ together with a set of additional 
points $S \subset \reals^d$ called {\it Steiner points}. 
A {\it rectilinear Steiner tree} is a Steiner tree where all edges 
are axis-parallel. Given $V$, the {\it rectilinear Steiner minimum 
tree (\RSMT{})} problem aims to compute the rectilinear Steiner tree 
spanning $V$ with smallest possible cost, defined as the total length 
of all edges in the tree. The RSMT problem has fundamental importance in VLSI 
physical design, as minimum wiring is correlated with key figures of 
merit including dynamic power, congestion, and timing delay. 
Hence, RSMT constructions have been well-studied for interconnect 
planning and estimation, timing estimation, global routing, and 
other applications \cite{KLMH}. 

The existence of an optimal RSMT whose Steiner points are restricted to the {\em Hanan grid}, which is formed by intersections of all axis-parallel lines passing through points in $V$, was established in \cite{Hanan66}. The \RSMT{} problem was subsequently shown to be NP-complete \cite{GareyJ77}. 
 It was proved by \cite{Hwang76} that the rectilinear minimum spanning tree gives a $3/2$-approximation of the RSMT. 
A series of results leveraging Zelikovsky's method 
led to a $5/4$-approximation \cite{BermanFKKZ94}. 
Theoretically, the best known approximation algorithm for \RSMT{} 
in fixed-dimensional Euclidean space is obtained via the PTAS 
(polynomial-time approximation scheme) proposed by 
Arora \cite{arora1998polynomial}. Arora's method
provides a $(1+\eps)$-approximation for a range 
of problems, such as the traveling salesperson problem,
in addition to \RSMT{}. Unfortunately, while this algorithm runs in 
time polynomial in the number of points, its time complexity depends exponentially on 
$\frac{1}{\eps}$ and, consequently, the method has not yet found its way to practice. 

On one hand, given the importance of the RSMT problem in chip design and its intractability, a large number of heuristics have been developed in the VLSI CAD community, aiming to improve the quality of RSMT computation with practical running time, e.g., \cite{Kahng2003,liu2021rest, Hu2006ACOSteinerAC, Fallin2022ASF, Wang2005ThePC, Cinel2008ADH}. The current state-of-the-art (SOTA) heuristic algorithm is FLUTE \cite{Wong2008flute, chu2007flute, Chu2005flute_old, Chu2004flute_old}.  
FLUTE constructs a lookup table encoding optimal RSMTs for all instances smaller than $10$ and then constructs RSMTs for larger pointsets by partitioning the input pointset into subsets of size $q$ or smaller, and combining the optimal trees over these subsets. 
FLUTE has been shown to be very close to optimal for small pointsets and is widely used in practice in VLSI CAD. GeoSteiner \cite{juhl2018geosteiner} is the SOTA algorithm for exact RSMT computation, and is an ILP-based approach. 



On the other hand, with recent success of deep neural networks (NNs)
in many applications, there has been a surge in use of NNs 
to help tackle combinatorial optimization problems 
\cite{bengio2021machine,khalil2017learning,li2018combinatorial,selsam2018learning,gasse2019exact,sato2019approximation},
such as traveling salesperson or other routing-related problems,
using reinforcement learning (RL) 
\cite{vinyals2015pointer,bello2017neural,deudon2018learning,prates2019learning}. 
Recently, REST \cite{liu2021rest} achieved the first 
NN-based approach for \RSMT{} by finding so-called 
rectilinear edge sequences using RL. 
\cite{chen2022reinforcement} designed an RL
framework to find obstacle-avoiding Steiner minimum trees.
Significant challenges in neural combinatorial optimization (NCO) remain. NNs are often used 
in an ad-hoc manner with limited theoretical understanding of 
the resulting framework.
It is also often not known if machine-learning pipelines have the capacity to solve a given combinatorial optimization problem,  or how network-architecture design could leverage problem structure to design more effective and efficient neural models.

One potential way to inject theoretical justification into the
design of neural approaches for combinatorial problems
 is to leverage the vast literature on approximation algorithms. 
 In particular, instead of 
using one NN to solve an optimization problem in 
an end-to-end manner, one can use neural components in a high-level algorithmic framework. An exemplary thread of works
uses NNs to learn variable-selection decisions in
branch-and-bound frameworks solving mixed integer-linear programming problems 
\cite{gasse2019exact,gupta2020hybrid,nair2020solving}. \cite{NNBaker} propose a mixed neural-algorithmic 
framework {\em NN-Baker} to solve
problems such as maximum independent set in the geometric setting 
by using Baker's technique to decompose problems 
into small instances of {\it bounded size}, and then training a 
single NN to solve these instances. 

\noindent 
\subsubsection{This Work.}
In this paper, we develop an approach
to compute RSMTs in $\reals^d$. (While we use $\reals^2$ and Manhattan
geometry, the framework
extends to $\reals^d$ for any constant value of $d$.) 
Specifically, we develop \NNSteiner{}\footnote{Code is open-sourced at https://github.com/ABKGroup/NN-Steiner.}, a mixed neural-algorithmic framework that leverages the  ideas 
behind \AroraAlg{} for \RSMT{} \cite{arora1998polynomial}. 

At a high level, \AroraAlg{} partitions the input domain
in a hierarchical manner, then solves the problem via a bottom-up dynamic programming (DP) procedure. 
One key result of \cite{arora1998polynomial} is that each DP subproblem
is of only bounded size, and thus, can be solved in time independent of the number of input points.
However, the complexity of this DP step is prohibitive in practice. 

In Section \ref{sec:instantiation}, we develop a mixed neural-algorithmic approach to simulate \AroraAlg{}; Fig.\ \ref{fig:nndp} gives a high-level illustration. 
The costly DP step 
is replaced by a single NN component that outputs a 
learned embedding of the solutions to the DP subproblems. This {\bf single NN module} is called repeatedly within the algorithmic framework. Other NN
components simulate the backward retrieval of Steiner points in a 
top-down manner. 
As each of the four NN components is of size independent of input problem size, the model complexity of each component is bounded.

On the theoretical front, we show in Theorem \ref{thm:theory} that this framework has the capacity to produce approximate RSMTs using only NNs of bounded complexity. 
On the practical front, NNs replace a key but costly component in \AroraAlg{}, leading to an 
efficient architecture. Furthermore, since the neural component only needs to handle fixed-size instances, training is easier. 
Once trained, \NNSteiner{} generalizes well to problems of much larger sizes than seen in training, as we demonstrate in Section \ref{sec:steinerexp}. Indeed, extensive  experimental results show that \NNSteiner{} achieves better performance 
than NN-based and non-NN-based SOTA methods for sufficiently large problem sizes.
\NNSteiner{} outperforms the existing RL-based neural approach significantly for large pointsets. \new{As input size increases, the RL-based policy must handle a larger action space, which makes it challenging both to learn the policy and to generalize.} \NNSteiner{}  learns an algorithmic component of fixed size, leading to superior generalization.

In summary, we propose \NNSteiner{}, a novel neural-algorithmic framework
for the \RSMT{} problem, which leverages the algorithmic idea
of \AroraAlg{} (which is the theoretical best approximation algorithm
for this problem). \NNSteiner{} is, to our best knowledge, the first neural architecture
of bounded size that has capacity to approximately solve the \RSMT{} problem.
Moreover, the algorithmic alignment of \NNSteiner{} leads to better
practical performance than existing SOTA methods for large instances. 
While we focus on the RSMT problem in this paper due to its practical importance in VLSI, the versatility of Arora's framework means our methodology can be potentially applied to other geometric optimization problems, such as the obstacle-avoiding RSMT  problem \cite{FOARS}. 

 Our work is one of the first NCO frameworks to use algorithmic alignment to remove dependence on problem size. To our best knowledge, the only other work in this direction is NN-Baker \cite{NNBaker}, which is limited to a very simple algorithmic setup (a flat partitioning of the input domain). The dynamic programming (DP) framework we consider is much more general: for example, a similar DP framework exists for many optimization problems (e.g., max independent set) for graphs with bounded tree width \cite{ParameterizedA}.

Removing problem-size dependence is important, as size generalization is a fundamental obstacle in NCO \cite{Garmendia2022} that is challenging to overcome \cite{Liu2022HowGI}. Training on large instances is prohibitively expensive: for supervised learning this requires computation of exact solutions to large instances, and for RL and unsupervised learning, training becomes exponentially more challenging as size increases.  Thus, size generalization is essential for performance on large instances. In some cases, such generalization is even provably hard  \cite{Yehudai2020FromLS}.
Notably, our experiments show \NNSteiner{} exhibits  strong size generalization on a hard optimization problem that has practical implications.

\section{Preliminaries}\label{sec:preliminaries}
We now introduce the \RSMT{} problem, then briefly describe 
\AroraAlg{} for this problem \cite{arora1998polynomial}. 
For simplicity, 
we henceforth treat the case where input points lie 
in $\reals^2$.
Our definitions and Arora's algorithm 
can both be extended to $\reals^d$, as well as to the standard Euclidean Steiner minimum tree problem (without rectilinear constraints). 


\begin{definition}[\RSMT{}]
Given a set of points $V \subset \reals^2$, the rectilinear Steiner minimum tree 
(\RSMT) for $V$ is a tree $T$ with vertex set $V \cup S \subset \reals^2$ with minimum 
total edge length under the $\ell_1$ norm. The set $S$ is the set of
{\it Steiner points}. 
\end{definition}
\begin{figure}[htbp]
\centering
\begin{tabular}{ccc}
\includegraphics[scale=0.35]{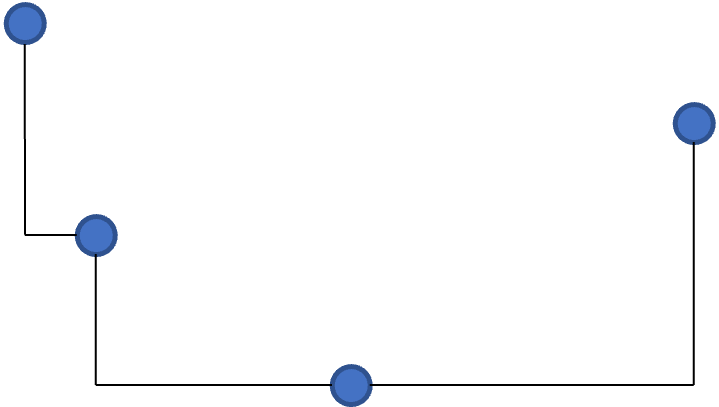} & \hspace*{0.2in}&
\includegraphics[scale=0.35]{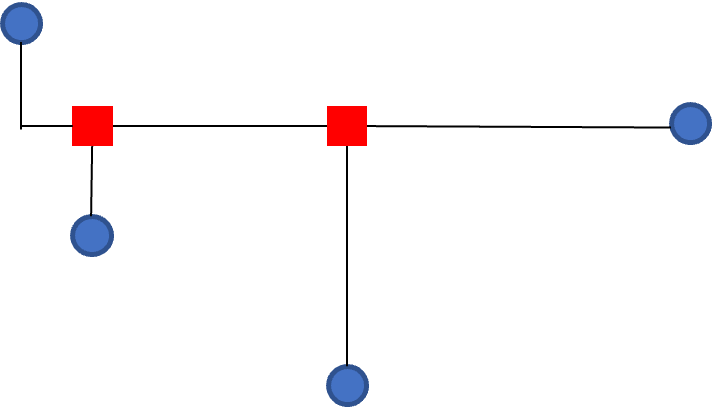} 
\end{tabular}
\caption{\small (L) Rectilinear spanning tree of input (blue) points. (R) 
Rectilinear Steiner minimum tree (red points are Steiner points).} 
\label{fig:rsmt}
\end{figure}

\subsection{Arora's PTAS}
We now describe the high-level idea behind Arora's 
polynomial-time approximation algorithm, which we refer to 
as {\it \AroraAlg{}}. 
For simplicity, we assume that the input points $V \subset \reals^2$ have integral 
coordinates, and are contained in a bounding box of side length $L = O(n)$ with 
$n = |V|$. A perturbation process given in \cite{arora1998polynomial} rounds 
input coordinates so that this assumption holds without changing theoretical 
guarantees for the algorithm. 

\subsubsection{Step 1: Construct a shifted quadtree.} 
First, we pick integers $a,b\in [0,L)$ uniformly at random
and then translate the input pointset by the
vector $(a,b)$. We then construct a quadtree, where the splitting of quadtree cells terminates if a cell contains 1 or 0 points. The 
quadtree is a tree $Q$ where each internal vertex has degree 4. The root 
has level $0$, and is associated with a cell of side length $L$. Any 
vertex $v\in Q$ at level $i$ is associated with a cell
$\acell_v$ of size (i.e., side length) $\frac{L}{2^i}$. Bisecting a level-$i$ cell $\acell_v$ with a horizontal line and with a vertical line
decomposes it into four level-$(i+1)$ child cells, each of 
size $\frac{L}{2^{i+1}}$, corresponding to the four children 
of $v\in Q$. 
As the side length of the bounding box is $L = O(n)$, and points have integral coordinates, the height 
(max-level) of the quadtree is at most $O(\log L)$ and the total number of 
vertices in $Q$ (and thus the number of cells across all levels) is 
 $O(n \log L) \subseteq O(n \log n)$. 

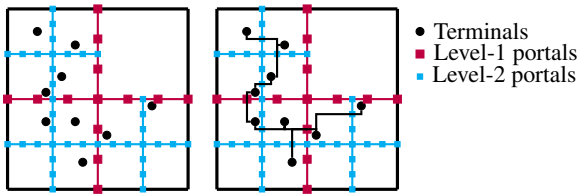
\begin{figure}[htbp]
    \centering
    \subfigure{
   \begin{tikzpicture}[scale = .6,
roundnode/.style={circle, draw=black, fill=black, very thick, minimum size=5mm},
terminal/.style={draw = black, fill = black,circle,inner sep=0pt,minimum size=3pt},
portal1/.style={draw = purple, fill = purple,rectangle ,inner sep=0pt,minimum size=3pt},
portal2/.style={draw = cyan, fill = cyan,rectangle ,inner sep=0pt,minimum size=2pt},
]

\draw[cyan,  thick] (0,1) -- (4,1);
\draw[cyan,  thick] (1,0) -- (1,4);
\draw[cyan,  thick] (0,3) -- (2,3);
\draw[cyan,  thick] (3,0) -- (3,2);

\draw[purple,  thick] (0,2) -- (4,2);
\draw[purple,  thick] (2,0) -- (2,4);

\draw[black, very thick] (0,0) -- (0,4);
\draw[black, very thick] (0,4) -- (4,4);
\draw[black, very thick] (4,4) -- (4,0);
\draw[black, very thick] (4,0) -- (0,0);

\draw  node[terminal] (1) at (0.66,3.5) {};
\draw  node[terminal] (2) at (1.5,3.2) {};
\draw  node[terminal] (3) at (1.2,2.5) {};
\draw  node[terminal] (4) at (.85,2.15) {};
\draw  node[terminal] (5) at (.85,1.5) {};
\draw  node[terminal] (6) at (1.5,1.5) {};
\draw  node[terminal] (7) at (1 + 2/3,.6) {};
\draw  node[terminal] (8) at (2.2,1.2) {};
 \draw  node[terminal](9) at (3.2,1.85) {};
 
\foreach \i in {0,...,6} 
{    \draw  node[portal1] at (2,\i*2/3) {}; 
    \draw  node[portal1] at (\i*2/3,2) {};
}

\foreach \i in {0,...,12} 
{    \draw  node[portal2] at (1,\i*1/3) {}; 
\draw  node[portal2] at (\i*1/3,1) {}; 
}

\foreach \i in {0,...,6} 
{    \draw  node[portal2] at (3,\i*1/3) {}; 
\draw  node[portal2] at (\i*1/3,3) {}; 
}

\end{tikzpicture}
    }
    \subfigure{
    \begin{tikzpicture}[scale = .6,
roundnode/.style={circle, draw=black, fill=black, very thick, minimum size=5mm},
terminal/.style={draw = black, fill = black,circle,inner sep=0pt,minimum size=3pt},
portal1/.style={draw = purple, fill = purple,rectangle ,inner sep=0pt,minimum size=3pt},
portal2/.style={draw = cyan, fill = cyan,rectangle ,inner sep=0pt,minimum size=2pt},
]

\draw  node[terminal,label=0: \small Terminals] (term key) at (4.5,3.5) {};
\draw  node[portal1,label=0: \small Level-1 portals] (port1 key) at (4.5,3) {};
\draw  node[portal2,label=0: \small Level-2 portals] (port2 key) at (4.5,2.5) {} ;


\draw[cyan,  thick] (0,1) -- (4,1);
\draw[cyan,  thick] (1,0) -- (1,4);
\draw[cyan,  thick] (0,3) -- (2,3);
\draw[cyan,  thick] (3,0) -- (3,2);

\draw[purple,  thick] (0,2) -- (4,2);
\draw[purple,  thick] (2,0) -- (2,4);

\draw[black, very thick] (0,0) -- (0,4);
\draw[black, very thick] (0,4) -- (4,4);
\draw[black, very thick] (4,4) -- (4,0);
\draw[black, very thick] (4,0) -- (0,0);

\draw  node[terminal] (1) at (0.66,3.5) {};
\draw  node[terminal] (2) at (1.5,3.2) {};
\draw  node[terminal] (3) at (1.2,2.5) {};
\draw  node[terminal] (4) at (.85,2.15) {};
\draw  node[terminal] (5) at (.85,1.5) {};
\draw  node[terminal] (6) at (1.5,1.5) {};
\draw  node[terminal] (7) at (1 + 2/3,.6) {};
\draw  node[terminal] (8) at (2.2,1.2) {};
 \draw  node[terminal](9) at (3.2,1.85) {};
 
\foreach \i in {0,...,6} 
{    \draw  node[portal1] at (2,\i*2/3) {}; 
    \draw  node[portal1] at (\i*2/3,2) {};
}

\foreach \i in {0,...,12} 
{    \draw  node[portal2] at (1,\i*1/3) {}; 
\draw  node[portal2] at (\i*1/3,1) {}; 
}

\foreach \i in {0,...,6} 
{    \draw  node[portal2] at (3,\i*1/3) {}; 
\draw  node[portal2] at (\i*1/3,3) {}; 
}

\draw[thick]  (1) |- (1, 4 - 2/3)
(1, 4 - 2/3) -| (1 + 1/3,3)
(2) -| (1 + 1/3,3)
(1 + 1/3,3) |- (3)
(3) |- (1,2 + 1/3)
(1,2 + 1/3) -| (4)
(4) -| (2/3,2)
(2/3,2) |- (5)
(5)  |- (1,1 + 1/3)
(1,1 + 1/3) -| (6)
(6) |- (2, 1+ 1/3)
(7) |- (2, 1+ 1/3)
(2, 1+ 1/3) -| (8)
 (8) |- (3, 1 + 2/3)
 (3, 1 + 2/3) -| (9)
;

\end{tikzpicture}

    }
    \caption{\small (a) A two-level quadtree over the input points (black dots), where each side of a quadtree 
    cell has 4 portals.  (b) An example of a $(2,1)$-light rectilinear Steiner tree.} 
    \label{fig:arora}
\end{figure}

We now consider a special family of Steiner trees for which the crossing of quadtree cells is constrained. 
\begin{definition}
\label{def:mr}
Let $m,r$ be positive integers. The {\it $m$-regular set of portals} 
is the set of points such that each cell has a portal at each of its 4 corners, and 
$m$ other equally-spaced portals on each of its four sides.
A Steiner tree is {\it $(m,r)$-light} if it crosses each edge of each 
cell at most $r$ times and always at a portal.
\end{definition}
 Note that if a side of a cell $S$
is contained in the sides of multiple cells, then the portals on $S$ are spaced according to the cell with side containing $S$ that has the lowest level $i$. We say that the portals on $S$ have level $i$. See Fig.\ \ref{fig:arora} for an example of quadtree decomposition and 
a $(2,1)$-light rectilinear Steiner tree.

\subsubsection{Step 2: Dynamic programming.}
The following theorem guarantees that the minimal $(m,r)$-light rectilinear Steiner tree is an approximate RSMT \cite{arora1998polynomial}. 
A proof is given in the supplemental \cite{supp}.

\begin{theorem}[Structure Theorem]
\label{th:structure}
If shifts $0 \leq a, b < L$ are chosen uniformly randomly in $[0, L)$, then with probability at least $1/2$, the minimum-length  $(m,r)$-light rectilinear Steiner  tree 
has length at 
most $\big( 1 + \frac{4}{r} + O(\frac{4\log L}{m}) \big) \mathsf{OPT}$, 
where $\mathsf{OPT}$ is the length an \RSMT. 
\end{theorem}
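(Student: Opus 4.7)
The plan is to start with an optimal rectilinear Steiner tree $T^*$ of length $\mathsf{OPT}$ and modify it in two stages into an $(m,r)$-light tree $T'$, while bounding the expected additional cost $\mathbb{E}_{a,b}[\cost(T') - \cost(T^*)]$ over the random shifts. Once the expectation bound is established, a standard Markov-style argument (applied to the non-negative random variable $\cost(T')-(1+4/r+O(4\log L /m))\mathsf{OPT}$, or rather to the relative excess cost) yields that with probability at least $1/2$ the stated multiplicative bound holds. Specifically, I would choose constants so that the expected excess is at most half the target bound, and conclude by Markov.

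The first modification addresses the portal constraint. For each level-$i$ cell edge $e$, I would reroute every place where $T^*$ crosses $e$ to the nearest $m$-regular portal on $e$. Each rerouting costs at most the portal spacing on $e$, which is $L/(2^{i}m)$. The key randomization step is this: a fixed edge $\sigma$ of $T^*$ of length $|\sigma|$ crosses, for each level $i$, a bounded number of grid lines, and under the uniformly random shift $(a,b)$ the probability that any given axis-parallel line of the level-$i$ dissection intersects $\sigma$ is $|\sigma|/(L/2^i)$. Summing the $L/(2^i m)$ cost per intersection against this probability telescopes to $O(|\sigma|/m)$ per level, and summing across the $O(\log L)$ levels of the quadtree gives total expected portal-shifting cost $O(|\sigma|\log L /m)$ per edge, hence $O(\mathsf{OPT}\cdot \log L /m)$ overall.

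The second modification enforces the $r$-crossing constraint via a Patching Lemma: if a single segment $\ell$ (the edge of some quadtree cell) is crossed by the current tree more than $r$ times, one can break all such crossings, add two copies of $\ell$ (or rather, a set of connecting pieces of total length $O(|\ell|)$) to restore connectivity, and reduce the number of crossings on $\ell$ to at most $4$. I would process cells bottom-up so that patching at level $i+1$ does not undo the invariant at level $i$. The cost charged to each patching step on a level-$i$ cell edge is $O(L/2^i)$, while the patching is only triggered when more than $r$ crossings occur; a charging argument (each unit of patching cost is paid for by at least $r$ units of length of $T^*$ local to $\ell$, thanks to the Moat Lemma / rectilinear inequalities) shows that the total patching cost is at most $(4/r)\mathsf{OPT}$ in expectation.

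The main obstacle, and the place where I would spend the most care, is the patching bookkeeping: showing that the cost of all patchings summed across levels is bounded by $(4/r)\mathsf{OPT}$ rather than by $(4\log L/r)\mathsf{OPT}$, since naively each level could contribute independently. This requires the charging to be done against local arc length of $T^*$ in a way that each piece of $T^*$ is charged only once per direction per axis. Once both expected-cost bounds are in hand, linearity of expectation gives $\mathbb{E}_{a,b}[\cost(T')]\le (1+4/r+O(4\log L/m))\mathsf{OPT}$, and Markov's inequality then yields the claimed $\tfrac{1}{2}$ success probability (by appropriately doubling the constants inside the $O(\cdot)$).
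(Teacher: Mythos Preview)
Your high-level plan (modify $T^*$ into an $(m,r)$-light tree, bound the expected added cost over the random shift, then apply Markov) is exactly the paper's approach, and your portal-moving analysis is essentially correct. Two substantive issues, however.

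First, a minor one: the paper does the two modifications in the opposite order---patching first, then portal-moving. Your order is problematic because patching a segment $\ell$ collapses many crossings into a single surviving crossing at an arbitrary point of $\ell$, which need not be a portal; so doing portals first and patching second would destroy the portal property. (Also, the paper's Patching Lemma reduces the number of crossings to at most $1$, not $4$, at cost at most $|\ell|$.)

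Second, and this is the real gap: your proposed mechanism for avoiding the $\log L$ factor in the patching cost is not the right one. You describe a deterministic local charging (``each unit of patching cost is paid for by at least $r$ units of length of $T^*$ local to $\ell$, thanks to the Moat Lemma / rectilinear inequalities''). No such charging can work: a single patching on a level-$j$ segment costs $L/2^j$, while the $r{+}1$ crossings that trigger it may be clustered in an arbitrarily small region, so there is no lower bound on the local arc length of $T^*$ near $\ell$ in terms of $L/2^j$. The paper's argument is instead genuinely probabilistic on \emph{both} stages, not just the portal stage. The random shift $a$ (resp.\ $b$) determines the \emph{level} of each vertical (resp.\ horizontal) grid line $\ell$: the probability that $\ell$ has level $i$ is at most $2^{i+1}/L$. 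If $\ell$ has level $i$, the total patching cost along $\ell$ is at most $\sum_{j>i} X_{\ell,j}\cdot L/2^{j}$, where $X_{\ell,j}$ is the number of patchings performed at level $j$. Taking expectation over $i$ and swapping sums gives
\[
\mathbb{E}_a\Big[\sum_{j>i} X_{\ell,j}\,\tfrac{L}{2^{j}}\Big]
\;\le\; \sum_{j\ge 1} X_{\ell,j}\sum_{i<j}\tfrac{2^{i+1}}{L}\cdot\tfrac{L}{2^{j}}
\;\le\; \sum_{j\ge 1} X_{\ell,j}
\;\le\; \frac{t(T^*,\ell)}{r},
\]
the last inequality because each patching eliminates at least $r$ crossings. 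Summing over all grid lines and using the elementary crossing bound $\sum_\ell t(T^*,\ell)\le 2\,\mathsf{OPT}$ (each unit of rectilinear length contributes at most $2$ to the total crossing count) gives expected patching cost $\le 2\,\mathsf{OPT}/r$. So the level-randomization, not a Moat-type charging, is what kills the $\log L$; your proposal as written does not contain this idea.
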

Hence, our goal is to compute a minimum
$(m,r)$-light rectilinear Steiner tree. In particular, Arora proposed 
to use DP in a bottom-up construction. We 
sketch the idea here. 

We process all quadtree cells in a bottom-up manner. For a fixed quadtree 
cell $\acell$, consider an $(m,r)$-light Steiner tree $T$ restricted to 
$\acell$; this gives rise to some Steiner forest $T_\acell$ in 
$\acell$, which can exit this cell only via portals on sides of $\acell$. In particular, the portion of the Steiner tree outside 
$\acell$ can be solved independently as long as we know the following 
{\it portal configuration}: (i) the set of {\it exiting portals} on the 
side of this cell that are used by $T$ (which connect points 
outside $\acell$ with those inside), and (ii) how these exiting portals 
are connected by trees in the Steiner forest $T_\acell$. 

Let $\Xi_\acell$ be the set of portal configurations for a cell 
$\acell$, and let $D = |\Xi_\acell|$.
\new{Each cell has $(4m+4)^{4r}$ subsets of $4r$ portals, and each subset of portals can be partitioned $\mathsf{Bell}(4r)$ ways. As the Bell number $\mathsf{Bell}(k) $ is bounded above by $k^k$ we have  $D =  (4m+4)^{4r}\mathsf{Bell}(4r) < 
(4m+4)^{8r}$.}
Our goal is to compute, for each portal configuration $\sigma \in \Xi$, 
the minimum cost $\acost(\sigma)$ of any rectilinear Steiner forest 
within $\acell$ that gives rise to this boundary condition. 
Assuming an arbitrary but fixed order of portal configurations 
in $\Xi = \{\sigma_1, \ldots, \sigma_D\}$, the costs of all 
configurations can then be represented by a vector 
$\costvec_\acell \in \reals^D$, where $\costvec_\acell[i] = 
\acost(\sigma_i)$. We call $\costvec_\acell$ the {\it cost-vector for 
$\acell$}.  

We now describe the DP algorithm to compute this cost-vector for all 
cells in a bottom-up manner (decreasing order of levels). 

\myparagraph{Base case: $\acell$ is a leaf cell} In this case, there is at most one point $p$ from $V$ contained in $\acell$. We enumerate all configurations, and compute $\costvec_\acell$ directly, which requires solving RSMT instances of bounded size. 

\myparagraph{Inductive step: $\acell$ is not a leaf cell} The four 
child-cells $\acell_1, \ldots, \acell_4$ of $\acell$ are from the 
level below  $\acell$'s level, and thus, by the inductive hypothesis, we have already 
computed the cost-vectors $\costvec_{\acell_i}$ for $i=1, \ldots, 4$. 
Consider any portal configuration $\sigma \in \Xi$. We simply need to 
enumerate all choices of portal configurations $\tau_1, \ldots, \tau_4$ 
for child-cells $\acell_1, \ldots, \acell_4$  that are 
\emph{consistent with $\sigma$}, meaning that the portals on common 
sides are the same, and the connected components of portals from the 4 
child-cells do not form cycles (hence, still induce a valid Steiner 
forest).  We have
\begin{equation}
\label{eq:induction}  
\acost(\sigma) = \min_{\tau_1, \ldots, \tau_4 ~\text{consistent with}~\sigma}   \sum_{i\in \{1,2,3,4\}}\acost[\tau_i]
\end{equation}



\myparagraph{Final construction of approximate RSMT}
At the end of DP, after we compute the cost-vector for the root cell, we identify the 
portal configuration $\sigma^*\in \Xi$ with lowest cost. To 
obtain the corresponding rectilinear Steiner tree,  we perform a top-down backtracking: 
(i)  for the root cell, from $\sigma^*$ we can retrieve 
the set of child-cell configurations $\tau_1^*, \ldots, 
\tau_4^*$ generating 
$\sigma^*$; (ii) we repeat this until we reach all leaf cells. At each leaf cell we once again compute the optimal Steiner forest for the chosen configuration. Combining these Steiner forests yields an 
optimal $(m,r)$-light RSMT.

\section{NN-Steiner}

Although the running time of \AroraAlg{} is polynomial in the problem size, 
its computation is prohibitively expensive in practice as we compute all 
possible portal configurations. 
Instead of brute-force enumeration of portal configurations, 
we propose a framework, {\em NN-Steiner}, which infuses neural networks 
(NNs) into \AroraAlg{}  to  select Steiner points from portals to build
the output Steiner tree. In Sec.\ \ref{sec:simulate}, we show that the key components 
within the DP algorithm can be simulated exactly by certain NNs. 
However, such NNs are not efficient either, \new{as their size depends exponentially on $m$ and $r$}. In Sec.\ \ref{sec:instantiation}, 
we show a practical instantiation of \NNSteiner{} and demonstrate its 
performance in Sec.\ \ref{sec:steinerexp}. 

\subsection{Theoretical \NNSteiner{} to simulate \AroraAlg{}}
\label{sec:simulate}

\begin{figure*}[h]
    \centering
    \includegraphics[width=1.8\columnwidth]{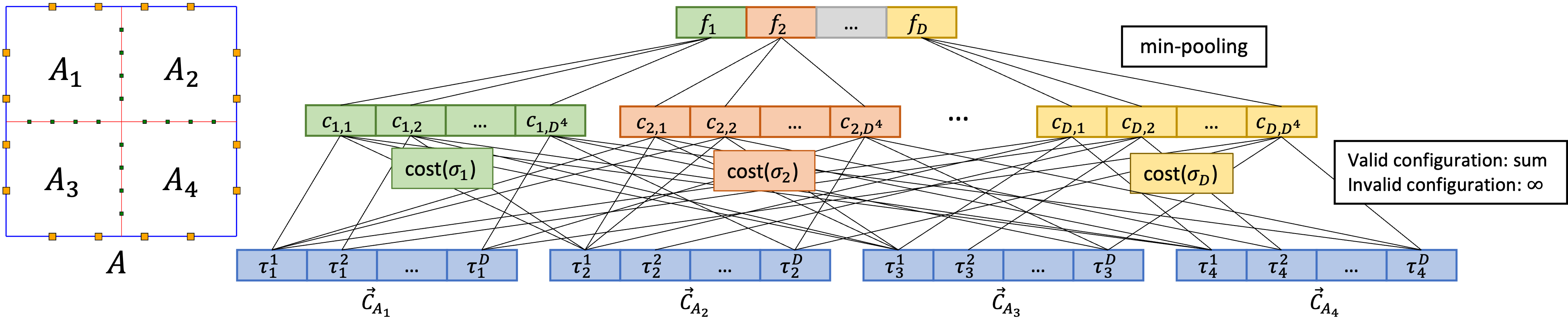}
    \caption{ \small An NN simulating the dynamic-programming function $f_{DP}$. Here, $\costvec_{\acell_i}[\sigma_j] = \tau_i^j$ encodes the
   $j^{\text{th}}$ configuration cost in cell $\acell_i$.  We sum 
    over costs $\acell_i$ $(i=1,2,3,4)$ consistent with a portal 
    configuration $\sigma_j$ $(j=1,2,...,D)$ so that the resulting vector cost($\sigma_j$) encodes 
    the cost of all configurations consistent with $\sigma_j$. 
    Min-pooling on $\acost(\sigma_j)$ yields the minimum cost 
    of configuration $\sigma_j$.} 
    \label{fig:nn}
\end{figure*}
We can simulate key components in the DP framework
of \AroraAlg{}. Specifically, we use four neural networks: 
$\Nbase$ and $\NDP$ respectively implement the base case and inductive step 
of DP to compute encodings of cost-vectors in a bottom-up manner, while
$\Ntop$ and $\Nretrieve$ obtain optimal portal configurations from cells in a top-down manner.

There exist designs and parameters of these NNs 
simulating Arora's DP algorithm exactly; moreover, these NNs are 
each of only {\bf bounded size} independent of $n$ (depending only on 
parameters $m$ and $r$, which are set to be constant in practice). 
Here, we describe how to construct 
$\NDP$ to simulate one inductive step in the DP algorithm. For brevity, we leave the
other NN cases to a full version. 

Recall that, as described in Sec.\ \ref{sec:preliminaries},  the inductive step of the DP algorithm can be 
rewritten as applying a function $f_{DP}: (\reals^D)^4 \to \reals^D$, where $D$ is the total number of portal configurations for a single cell. 
In particular, for any quadtree cell $\acell$ with child-cells 
$\acell_1, \ldots, \acell_4$, the input to $f_{DP}$ is the four 
cost-vectors $(\costvec_{\acell_1}, \costvec_{\acell_2}, 
\costvec_{\acell_3}, \costvec_{\acell_4}) \in (\reals^D)^4$, and the 
output is the cost-vector $\costvec_{\acell} \in \reals^D$. 

Eqn.\ (\ref{eq:induction}) gives how to compute each entry 
in the output vector $f_{DP}(\costvec_{\acell_1}, \ldots, \costvec_{\acell_4})$. 
That is, $f_{DP}$ has a simple form modeled as a certain linear 
function followed by a min-pooling, which can be simulated by a NN 
as shown in Fig.\ \ref{fig:nn}. In particular, to compute
$\costvec_{\acell}[\sigma_i]$, each neuron $c_\ell$ takes 
in a set of four portal configurations $\tau_j \in \acell_j$, $j = 1, 
\ldots, 4$, consistent with $\sigma_i$, and simply does a 
sum operation. Then the output takes a min-pooling over all 
values at $c_\ell$s. 
As there are at most $D^4$ sets of four 
portal configurations for each $\sigma_i$, the entire model has 
complexity $\Theta(D^5)$. (Recall that $D \le (4m+4)^{8r}$ is 
independent of the input pointset size $n$.) 
That is, there is a $\NDP$ of bounded complexity simulating
the DP step exactly.  The following theorem summarizes the
existence of NNs to implement \AroraAlg{}.

\begin{theorem}
\label{thm:theory}
There exist four NNs, each of only bounded size depending 
only on $m$ and $r$, that can simulate the DP 
algorithm of \AroraAlg{}, such that the resulting mixed neural-algorithmic framework can 
find a $\big(1 + \frac{4}{r} + O(\frac{4\log L}{m})\big)$-approximate 
rectilinear Steiner tree (i.e., with length at most 
$\big(1 + \frac{4}{r} + O(\frac{4\log L}{m})\big) \mathsf{OPT}$). 
The framework calls these NNs only $O(n \log L) \subseteq O(n \log n)$ times. 
\end{theorem}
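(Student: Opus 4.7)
The plan is to construct the four networks explicitly, show that their composition exactly reproduces the DP of \AroraAlg{} on the shifted quadtree, and then invoke the Structure Theorem (Theorem \ref{th:structure}) to convert exact computation of the minimum $(m,r)$-light rectilinear Steiner tree into the claimed $\big(1+\tfrac{4}{r}+O(\tfrac{4\log L}{m})\big)$-approximation (with the same probability $\ge 1/2$ over the random shift as in Theorem \ref{th:structure}, boostable to any constant by repeated trials). Because all four networks operate on objects of size bounded in terms of $m,r$ only (cost-vectors lie in $\reals^D$ with $D \le (4m+4)^{8r}$, and leaf cells contain at most one terminal), bounded model complexity is built into the setup from the start; the main content of the proof is correctness of each simulator and of their composition.

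Three of the four networks are essentially algebraic. $\NDP$ is already constructed in the discussion preceding the theorem: each entry $\costvec_\acell[\sigma_i]$ is a min over at most $D^4$ consistent tuples $(\tau_1,\ldots,\tau_4)$ of a sum of four child-entries, which I realise as one linear layer followed by a min-pool using $\Theta(D^5)$ neurons, with correctness immediate from Eqn.\ (\ref{eq:induction}). $\Ntop$ reads the root cost-vector and returns the index of its minimum entry via a standard comparator circuit of size $\mathrm{poly}(D)$. $\Nretrieve$, given a parent configuration $\sigma$ and four child cost-vectors, returns the minimising tuple in Eqn.\ (\ref{eq:induction}) over the $O(D^4)$ consistent tuples; this is an $\arg\min$ circuit of the same $O(D^5)$ size. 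The consistency predicate is fixed combinatorial data of the quadtree cell and can be hard-coded into the sparsity pattern of the networks.

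The base-case network $\Nbase$ is the most delicate. It takes the coordinates of a leaf cell together with the at most one point of $V$ inside and must return the cost-vector $\costvec_\acell$, whose entry $\acost(\sigma)$ is the length of the shortest rectilinear Steiner forest inside $\acell$ that realises the boundary data prescribed by $\sigma$. The key observation is that, within a cell with a bounded number of portals and at most one terminal, an optimum Steiner forest has one of only finitely many combinatorial types (edge routings and Steiner-point adjacencies on the cell's Hanan grid), and for each fixed type the cost is a linear function of the coordinates. Thus $\acost(\sigma)$ is a piecewise-linear minimum over finitely many linear pieces, and a bounded ReLU network of size depending only on $m$ and $r$ realises it exactly.

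Putting everything together, I process the shifted quadtree bottom-up, calling $\Nbase$ once per leaf and $\NDP$ once per internal cell, then top-down from the root using one call to $\Ntop$ and one call to $\Nretrieve$ per internal cell. By induction on cell level this reproduces the cost-vectors of Arora's DP exactly, and the top-down pass returns the chosen portal configurations, from which the minimum $(m,r)$-light rectilinear Steiner tree is decoded at the leaves; Theorem \ref{th:structure} then bounds its length by $\big(1+\tfrac{4}{r}+O(\tfrac{4\log L}{m})\big)\mathsf{OPT}$. Since the quadtree has $O(n\log L)$ cells and each is touched $O(1)$ times, the total number of NN calls is $O(n\log L)\subseteq O(n\log n)$. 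The step I expect to be the genuine obstacle is the construction of $\Nbase$: one has to establish the finite-case, piecewise-linear structure of the optimum Steiner forest as a function of the continuous portal and terminal positions, so that a ReLU network of size independent of $n$ really can represent the base case exactly. The remaining pieces are straightforward algebraic translations of Eqn.\ (\ref{eq:induction}) combined with the already-stated Structure Theorem.
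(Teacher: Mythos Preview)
Your proposal is correct and follows essentially the same approach as the paper. The paper itself only explicitly constructs $\NDP$ (the linear-plus-min-pool network of size $\Theta(D^5)$ realizing Eqn.~(\ref{eq:induction})), deferring the other three networks ``to a full version''; your construction of $\NDP$ matches exactly, and your sketches for $\Ntop$, $\Nretrieve$, and $\Nbase$ (argmin/comparator circuits of size $\mathrm{poly}(D)$ for the first two, and a finite-case piecewise-linear ReLU network exploiting the bounded Hanan grid for the last) are the natural completions, after which the approximation guarantee and the $O(n\log L)$ call count follow directly from Theorem~\ref{th:structure} and the quadtree size bound, just as you say.
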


\subsection{Practical Instantiation of \NNSteiner}
\label{sec:instantiation}

 The theoretical neural-algorithmic framework described in the previous section is not 
practical as it is explicitly encoding the exponential 
number of portal configurations (exponential in $m, r$). 
We now present \NNSteiner{}, a practical instantiation of this framework. Theorem \ref{thm:theory} implies that the \NNSteiner{} architecture has the capacity to approximately solve the RSMT problem using NN components of bounded size.  In practice, one hopes that \NNSteiner{} can leverage data to encode portal configurations more efficiently.

The pipeline starts with a neural network $\GNbase$, that acts on each leaf cell to produce an encoding of the configuration costs. Next, we apply a NN simulation of the dynamic programming step $\GNDP$, for each non-leaf cell. We then use two NNs, $\GNtop$ and $\GNretrieve$, to simulate the backtracking stage and return the likelihood that each portal is a Steiner point. By selecting high-likelihood portals, we construct a set of Steiner points and a corresponding Steiner tree. As the selected Steiner points $S$ must 
lie on cell boundaries, we finish with a local refinement scheme which introduces Steiner points that lie in the interior of leaf cells.  See Fig.\ 
\ref{fig:nndp} for a high-level overview of our practical \NNSteiner{} pipeline. 
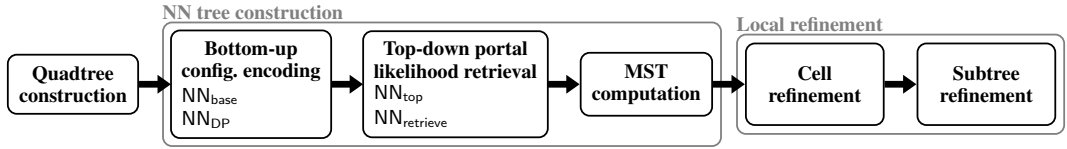
\begin{figure*}[h]
    \centering
\begin{tikzpicture}
   \node [scale=0.85, transform shape,  inner sep=0] {
   
   \begin{tikzpicture}[node distance=.5cm, font = \footnotesize,
  mymatrix/.style={matrix of nodes, nodes=typetag, row sep=-.6 em, column sep = -1.5 em},
  mycontainer/.style= {font =  \bf\footnotesize, thick, draw = black, rounded corners, inner sep=0 },
    largecontainer/.style= {font =  \bf\footnotesize, thick, draw = gray, rounded corners, inner sep=.125 cm},
  typetag/.style={ inner sep=1ex, align = left, anchor=west},
  title/.style={font =\footnotesize \bf}
  ]
    \node[mycontainer, align=center, inner sep=.5em](mx0) {Quadtree \\ construction};

  \matrix[mymatrix, right=of mx0.east, matrix anchor= west] (mx1) {
    \node[title,align=center] {Bottom-up \\config. encoding}; \\
    $\GNbase$ \\
    $\GNDP$ \\
  };
  \matrix[mymatrix, right=of mx1.east, matrix anchor=west] (mx2) {
    \node[title, align = center] {Top-down portal \\likelihood retrieval}; \\
    $\GNtop$\\
    $\GNretrieve$ \\
  };
    \node[mycontainer, align=center, inner sep=.5em,right=of mx2.east, anchor = west](mx25) {MST \\ computation}; 
  
\node[mycontainer, align=center, thick, inner sep=1em, right=of mx25.east, anchor = west](mx3) { Cell \\ refinement};
\node[mycontainer, align=center, inner sep=1em, right=of mx3.east, anchor = west](mx4) {Subtree \\ refinement};

\node[mycontainer, fit=(mx1) ]  {};
\node[mycontainer, fit=(mx2) ]  {};
\node[largecontainer, fit=(mx1) (mx2) (mx25) ](tree) {};
\node[largecontainer, fit=(mx3) (mx4)  ] (refinement){};
\node[ above =  1 pt of tree.north west, anchor = south west , text = gray  ] {\bf NN tree construction};
\node[ above = 1 pt  of refinement.north west, anchor = south west, text = gray   ] {\bf Local refinement};
\draw[line width=3pt, ->, -{Triangle[width=8pt,length=6pt]}]  (mx0.east) -- (mx1.west);
\draw[line width=3pt, ->, -{Triangle[width=8pt,length=6pt]}]  (mx1.east) -- (mx2.west); 
\draw[line width=3pt, ->, -{Triangle[width=8pt,length=6pt]}]  (mx2.east) -- (mx25.west) ;
\draw[line width=3pt, ->, -{Triangle[width=8pt,length=6pt]}]  (mx25.east) -- (mx3.west) ;
\draw[line width=3pt, ->, -{Triangle[width=8pt,length=6pt]}]  (mx3.east) -- (mx4.west) ;
\end{tikzpicture}
   
   };  
\end{tikzpicture}
    \caption{\small Pipeline of an NN-Steiner instantiation.}
    \label{fig:nndp}
\end{figure*}

\subsubsection{Forward pass.} The forward processing involves 
two MLPs, $\GNbase$ and $\GNDP$, and calls $\GNDP$ recursively in a
bottom-up manner to compute an \emph{implicit encoding of the costs of 
possible portal configurations}.

\myparagraph{Base-case neural network $\GNbase$} 
At the leaf level of \AroraAlg{}, each cell contains at most $1$ point. 
We instead terminate the quadtree decomposition when a cell 
contains no more than $\myk$ points, where $\myk$ is a hyperparameter. 
Given a leaf cell $\bcell$ with $V_\bcell \subset V$ the set of 
points contained in $\bcell$, $\GNbase$ takes relative coordinates of $V_\bcell$ as well as the set of portals on the sides
of $\bcell$ as input, and 
outputs a $\myd$-dimensional vector as an implicit 
encoding of the cost vector $\costvec_\bcell \in \reals^D$; 
note that $\myd << D$ in practice. Coordinates are specified relative to the bottom left corner of the cell, and are normalized by the cell size. \new{If a leaf cell has $n_p  < \myk$ input points, we pad the remaining $\myk - n_p$ coordinates with $(-1,-1)$.}  Each cell can have at most  $4m+8$ portals ($m$ + 2 on each side) and these portals can appear only at $4m+8$  distinct relative locations in the cell.  Portals are then provided to $\GNbase$ as $4m+8$ indicators in $\{0,1\}$.
Here, each corner contains two portals to simplify computation by distinguishing connections passing through from different sides. (Arora's PTAS uses a single portal at each corner.)

Terminating with at most $\myk$ points in each leaf cell is advantageous as, for a quadtree cell with very few points, it is challenging to learn a 
meaningful encoding of portal configurations. If $\myk$ is small, the majority of 
cells have few points inside. (Note that a complete degree-4 tree has around 
75\% of its nodes at the leaf level.) Hence, training on such a collection of 
cells tends to provide bad supervision, harming the effectiveness of learning. 
On the other hand, as $\myk$ increases, larger Steiner-tree instances must be computed at leaves, which may negatively affect the performance of the framework.
In our experiments, $\myk$ is a hyperparameter; see 
Sec.\ \ref{sec:steinerexp} for its effect and choice. 


\myparagraph{ DP-inductive-step neural network $\GNDP$} 
Next, we use another MLP, $\GNDP$, to simulate the
function $f_{DP}$ which, as mentioned in Sec.\ \ref{sec:simulate}, is 
equivalent to the DP in \AroraAlg{}. 
In detail, given a cell $\acell$ at level $i$, let $\acell_1, 
\ldots, \acell_4$ be its 4 child-cells at level $i+1$ in a fixed order. 
The neural network $\GNDP$ takes encodings of $\costvec_{\acell_1}, \dots, \costvec_{\acell_4}$, and the portals of $\acell$, and generates an encoding of 
the cost vector $\costvec_\acell \in \reals^D$ of the parent cell $\acell$. Note that the encodings of $\costvec_{\acell_1}, \dots, \costvec_{\acell_4}$ are produced by applying either $\GNDP$ or $\GNbase$  at $\acell_1, 
\ldots, \acell_4$.   Again, portals are provided via $4m+8$ indicators in $\{0,1\}$.

\subsubsection{Backward pass.} 
As described above, the forward pass applies $\GNbase$ 
to all leaf cells, and then $\GNDP$ to all internal vertices, to simulate 
the bottom-up DP algorithm of \AroraAlg{}. 
Now,
we use two more NNs to simulate the backtracking stage of 
the DP algorithm. Together, these NNs construct a \emph{portal-likelihood map}  $\rho: \mathcal{P} \to [0,1]$, where $\mathcal P$ is the set of all portals.

\myparagraph{Root-level retrieval neural network $\GNtop$} 
For a cell $\acell$ let $\mathrm{Portals}(\acell)$ denote the portals in $\acell$ that are one level higher than $\acell$'s level, i.e., the portals on the vertical and horizontal segments bisecting $\acell$. Let $\rcell$ be the root cell with children $\acell_1, 
\ldots \acell_4$. The input to $\GNtop$ is the output of $\GNDP$ at $\rcell$, and the input of $\GNDP$ at $\rcell$. 
The output of $\GNtop$ is a vector of the likelihoods of $\mathrm{Portals}(\rcell)$, where $\rcell$ is the root cell.

\myparagraph{Backward retrieval step neural network $\GNretrieve$} 
We compute the rest of the portal likelihoods in a top-down manner. This  is 
achieved by using a neural network $\GNretrieve$ (at all 
non-root non-leaf cells $\acell$) which computes the likelihoods of $\mathrm{Portals}(\acell)$. 
The input of $\GNretrieve$ applied at a cell $\acell$  with child-cells $\acell_1, 
\ldots$ and $ \acell_4$, is comprised of two parts.
First, $\GNretrieve$ takes the output of four instances of either $\GNbase$ or $\GNDP$ applied at $\acell_1, 
\ldots$, and $ \acell_4$ (same as $\GNtop$). Second, $\GNretrieve$  receives likelihoods for every portal on the boundary of $\acell$.  At this step, these likelihoods will have been computed previously by an instance of either $\GNretrieve$ or $\GNtop$ applied at the level above $\acell$'s level.

\subsubsection{Retrieval of Steiner points and postprocessing.}
After the backward pass, we have a portal-likelihood map $\rho$
over all portals. We select all portals with likelihood greater than \new{a threshold $t \in (0,1)$} as the initial set of Steiner points $S$. We then compute the minimum spanning tree over $S \cup V$, which takes time $O(|S \cup V|\log |S \cup V|)$. Next, we apply three local refinement steps:
\begin{enumerate}\denselist 
    \item First, we iterate over all leaf cells and for each cell replace every connected component with the optimal Steiner tree connecting all of the component's Steiner points (selected portals) and input points. 
    \item  Next, we remove Steiner points with degree less than 3 and round the locations of the remaining Steiner points to integer coordinates. 
    \item Finally, we partition\footnote{We partition by iterating the following procedure: (i) select a leaf, (ii) use breadth-first search to select vertices until $k$ input points or all remaining input points are selected, then (iii) remove these vertices. The subtree induced by each removed set of vertices, which contains at most $k$ input points, forms an element of the partition.} the tree into subtrees with $k$ or fewer input points. We replace each subtree with the optimal Steiner tree over both its input points and any Steiner point which is adjacent to a vertex in another partition. 
\end{enumerate}

Optimal Steiner trees are computed using GeoSteiner 5.1 \cite{juhl2018geosteiner}. The first step, cell refinement, introduces Steiner points into the interior of cells, since initially Steiner points can only be at portals.  The second step removes unnecessary Steiner points and rounds the locations of Steiner points. Rounding simplifies the tree and can be done with minimal effect on the solution since an optimal Steiner tree always lies on the Hanan grid and thus has integer Steiner points. The final step, subtree refinement, allows Steiner points at portals to be moved. 

\subsubsection{Training.}

\begin{figure}[h]
    \centering
    \includegraphics[width =\columnwidth]{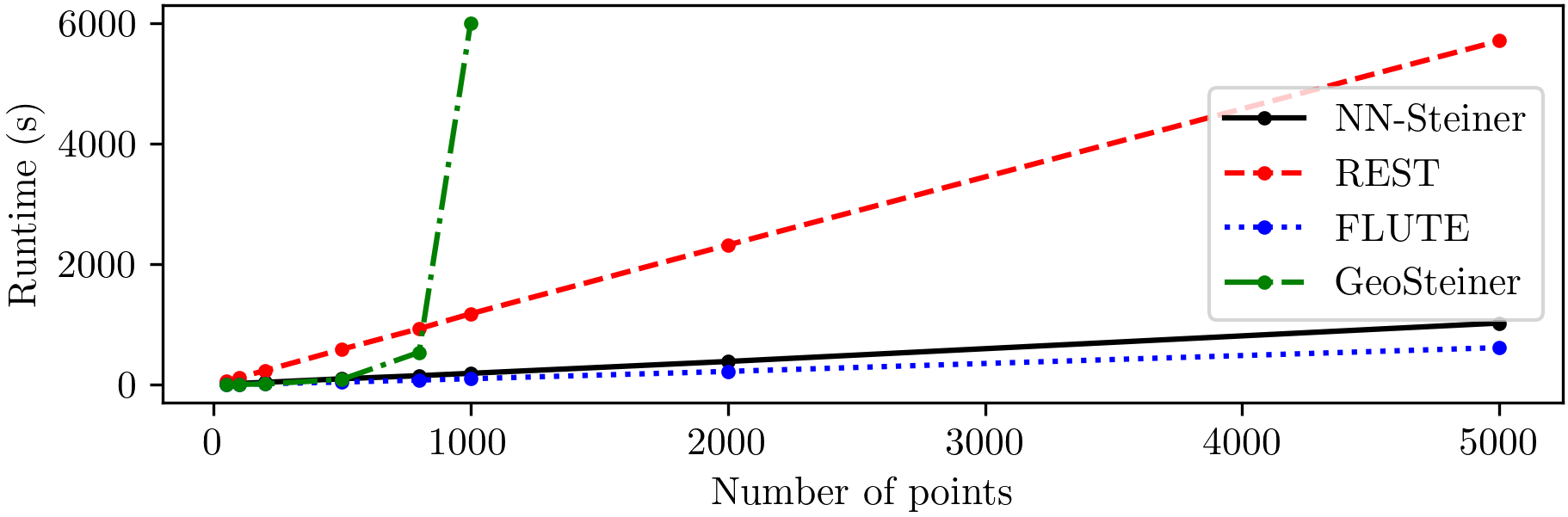}
\caption{\small Total runtime for solving 100 pointsets.}
    \label{fig:runtime}
\end{figure}

 For each training pointset, we compute the optimal Steiner tree using GeoSteiner 5.1 \cite{juhl2018geosteiner}. Each time this optimal tree crosses a side of a cell, we move the crossing to the nearest portal.  The resultant set of portals with crossings is used as the target in computing the loss. For the loss, we use binary cross entropy with weights of $m + 1$ for portals that are Steiner points in the target, and weights of 1 for other portals. As the classification is imbalanced, with most portals not being Steiner points, this weighting scheme prevents the classifier from achieving low loss with uniformly near-zero portal likelihoods. 

We train all four networks in an end-to-end manner. In training, the models are connected in a tree structure similar to that used in RNNs (recursive neural networks) \new{in that the output of $\NDP$ is fed into the same $\NDP$ instance repeatedly in our architecture. However, RNNs are often connected linearly while \NNSteiner{} connects NNs in a tree structure.} To accelerate training, we utilize batch-mode training. Batch mode necessitates that the same tree structure is used for each training sample. Otherwise, the model shape would be different between training samples, and these samples could not be learned in parallel. 

We use pointsets sampled from a distribution tailored for compatibility with batch-mode training. Pointsets are constructed as follows: (i)  Sample $n_0$ points from a distribution $\mathcal D$ on an integer grid of size $N \times N$. (ii) Construct a depth-$d$ quadtree. (iii) In each quadtree leaf cell with greater than $k_b$ points, remove points randomly until only $k_b$ points remain. 
With these pointsets, a depth-$d$ quadtree can always be used with no more than $k_b$ points in every leaf cell. 
Note that while we use a fixed tree structure throughout training, for testing, the tree structure is decided by the specific test pointset and can have any shape or depth.

\section{Experimental performance}
\begin{table*}[!htbp]
\centering
\small
\begin{tabular}{|l|r|r|r|r|r|r|r|r|}
\hline
Number of points & 50             & 100           & 200           & 500            & 800            & 1000           & 2000                  & 5000                  \\ \hline
NN-Steiner       & 2.10	          &  1.38	      & 0.74	      & \textbf{-0.67}	& \textbf{-1.11} & \textbf{-1.43} & \textbf{-2.44}        & \textbf{-2.99}         \\ \hline
REST (T=8)        & \textbf{-0.14} & 1.08          & 7.40          & 22.68          & 35.16          & 42.52          & 75.12                 & 147.48                \\ \hline
FLUTE (A=18)      & 0.00           & \textbf{0.00} & \textbf{0.00} & 0.00           & 0.00           & 0.00           & 0.00                  & 0.00                  \\ \hline
GeoSteiner       & -0.54          & -1.23         & -2.25         & -3.71          & -4.43          & -4.78          & -- & -- \\ \hline
\end{tabular}
\caption{\small Performance comparison on uniformly distributed pointsets (average percent length difference compared to FLUTE).   }
\label{tab: uniform-length}
\end{table*}
\begin{table*}[!htbp]
\small
\centering
\begin{tabular}{|l|l|l|l|l|l|l|l|l|}
\hline
Number of points                                              & 50    & 100   & 200   & 500   & 800   & 1000  & 2000  & 5000  \\ \hline
NN tree construction + refinement (\NNSteiner{}) & 2.10  & 1.38  & 0.74  & -0.67 & -1.11 & -1.43 & -2.44 & -2.99 \\ \hline
MST of $V$ + refinement                                       & 2.54  & 2.50  & 1.44  & 0.04  & -0.75 & -1.04 & -1.73 & -2.46 \\ \hline
NN tree construction + cell refinement                        & 3.01  & 2.54  & 1.84  & 0.39  & -0.10 & -0.45 & -1.39 & -2.08 \\ \hline
NN tree construction + subtree refinement                     & 6.76  & 5.55  & 4.90  & 3.39  & 3.20  & 3.07  & 1.64  & 1.44  \\ \hline
FLUTE + refinement                                            & -0.03 & -0.08 & -0.15 & -0.22 & -0.27 & -0.27 & -0.31 & -0.33 \\ \hline
\end{tabular}
\caption{\small Ablation experiments showing the effect of refinement (average percent length difference compared to FLUTE).}
\label{tab: uniform-ablation}
\end{table*}

\label{sec:steinerexp}
We present experimental results of \NNSteiner{} on 
planar RSMTs. Our experiments show that \NNSteiner{} outperforms SOTA algorithms for large point sets and has approximately linear runtime.  We demonstrate the effectiveness of \NNSteiner{} on differently distributed pointsets. We give ablations that elucidate the role of our portal-retrieval and refinement schemes and we show the dependence of \NNSteiner{} on critical hyperparameters. 
All of our experiments run on a 64-bit Linux server with a 2.25GHz AMD EPYC 7742 Processor (256 threads) and three Nvidia RTX A100-SXM4 GPUs allocating 80GB RAM.

\myparagraph{Implementation}
Each of $\Nbase, \NDP, \Ntop, \Nretrieve$ is implemented using a 4-layer MLP, with ReLU activation and size-4096 hidden layers.
The output of $\Nbase$ and $\NDP$ is a vector of dimension $\myd = 16(4m +8)$.
An additional sigmoid layer is appended to $\Ntop$ and $\Nretrieve$ to output the portal likelihoods in $[0,1]$. For training, we use a dropout of $0.1$ and a batch size of 5000.  Training pointsets are generated using a uniform distribution with $n_0 = 180$ initial points, $N = 100$ grid lines, and using a quadtree of depth $d = 3$. ( In testing, the quadtree may have depth much greater than $d$, depending on the number and distribution of points.)
We train the models on 120,000 pointsets solved exactly by GeoSteiner. For the refinement step, we partition into subtrees of size $k = 10$.  We train for 5000 epochs using Adam \cite{kingma2017adam} as our optimizer, with a learning rate of $10^{-4}$. The default setting of \NNSteiner{} is $m = 15$ and $k_b = 4$ \new{with the threshold set to $t = .95$.} 

\myparagraph{Performance comparison}
The results of the exact RSMT solver
GeoSteiner 
are ground truth solutions. 
Since we could not compute GeoSteiner for large instances, we use FLUTE \cite{chu2007flute}, the SOTA heuristic solver, as the base for comparison of  approaches. Table \ref{tab: uniform-length} shows the performance comparison. Smaller values indicate better performance and negative values indicate superior performance to FLUTE, i.e., length is smaller than the output of FLUTE. All the FLUTE instances are run with $A = 18$, the setting which produces the highest-quality solutions. We also make comparisons against
REST \cite{liu2021rest} with $T = 8$, the best-performing setting claimed by the work. Batch sizes are set to 1.
Test pointsets are generated uniformly at random from a $10^4 \times 10^4$ grid. Reported values are averages over 100 pointsets.
Table \ref{tab: uniform-length} shows that \NNSteiner{} generalizes to large pointsets. (Recall that training pointsets have less than $n_0 = 180$ points.)  Furthermore, \NNSteiner{} outperforms FLUTE and REST for pointsets of size 500 or greater, with margin of outperformance increasing with pointset size. \NNSteiner{} has advantage over FLUTE for large problems, however, it performs best on small point sets relative to the exact solution. Also, Fig.\ \ref{fig:runtime} shows that the runtime of \NNSteiner{} scales approximately linearly.

\myparagraph{Generalization to Different Distributions}
We test the generalization of \NNSteiner{} to different pointset distributions. In particular, \NNSteiner{} is trained on uniformly distributed pointsets, but tested on pointsets with mixed normal and non-isotropic normal distributions.
Points are restricted to a $10^4 \times 10^4$ grid. The standard deviation of the non-isotropic normal distribution is taken to be $3000$ for both $x$ and $y$ and $1500$ for mixed-normal distribution. Means are distributed uniformly. The covariance matrix of the non-isotropic normal distribution is given by uniformly picking a correlation in $[-1, 1]$. For the mixed-normal distribution, we use a uniform mixture of 10 normal distributions.\footnote{{We do not generate mixed-normal distributions of 5000 points because the $10^4 \times 10^4$ granularity is too restrictive.}} Fig.\ \ref{fig:distribution} shows that \NNSteiner{} generalizes to different distributions, despite only being trained on uniformly distributed pointsets. 

\begin{figure}[!htbp]
    \centering
    \includegraphics[width =\columnwidth]{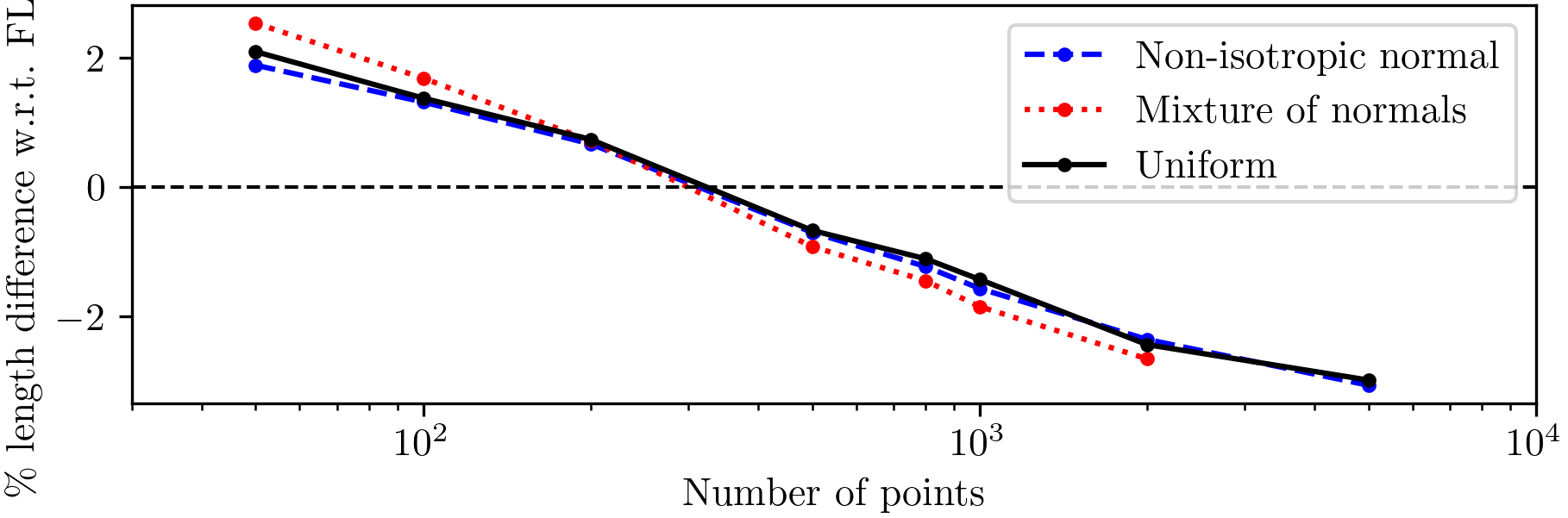}
    \caption{\small Performance of \NNSteiner{} on different distributions.}
    \label{fig:distribution}
\end{figure}

\myparagraph{Ablations}
We use an ablation study to show the importance of each component of our algorithm. To determine the impact of different components on performance as a whole, we evaluate the performance after removing each component. The components we consider are tree construction,  cell refinement, and subtree refinement (Fig.\ \ref{fig:nndp}). 
Tree construction produces an MST over $V \cup S$ where $S$ is found by portal retrieval. To evaluate performance without this component, we instead apply refinement to the MST over $V$ (here, edges between cells are preserved in cell refinement). Table \ref{tab: uniform-ablation} shows that each of these stages contributes to the overall performance, with refinement contributing the most. However, this observation does not undermine the importance of tree construction; if we apply refinement to the output of FLUTE \cite{chu2007flute}, also shown in Table \ref{tab: uniform-ablation}, we see limited improvement.  This suggests that the global topology produced by tree construction and cell refinement for large pointsets is superior to that produced by FLUTE.

\myparagraph{Hyperparameter choice}
We evaluate how $m$ and $k_b$ affect performance (Fig.\ \ref{fig:hyper}). Results show poor performance for small values of $k_b$, which we attribute to overfitting.  In theory, $(m,r)$-light trees are better approximations with greater values of $m$. However, for smaller instances, $m = 7$ yields similar performance as $m = 15$. This suggests that, for smaller instances, the fine granularity from larger $m$ is not needed to generate high-quality global tree topologies. Also, models with smaller $m$ may exhibit better performance as they are easier to train due to less classification imbalance. \new{ Experiments and discussion for threshold selection are  given in the supplemental material \cite{supp} }
\begin{figure}[!htbp]
    \centering
    \includegraphics[width =\columnwidth]{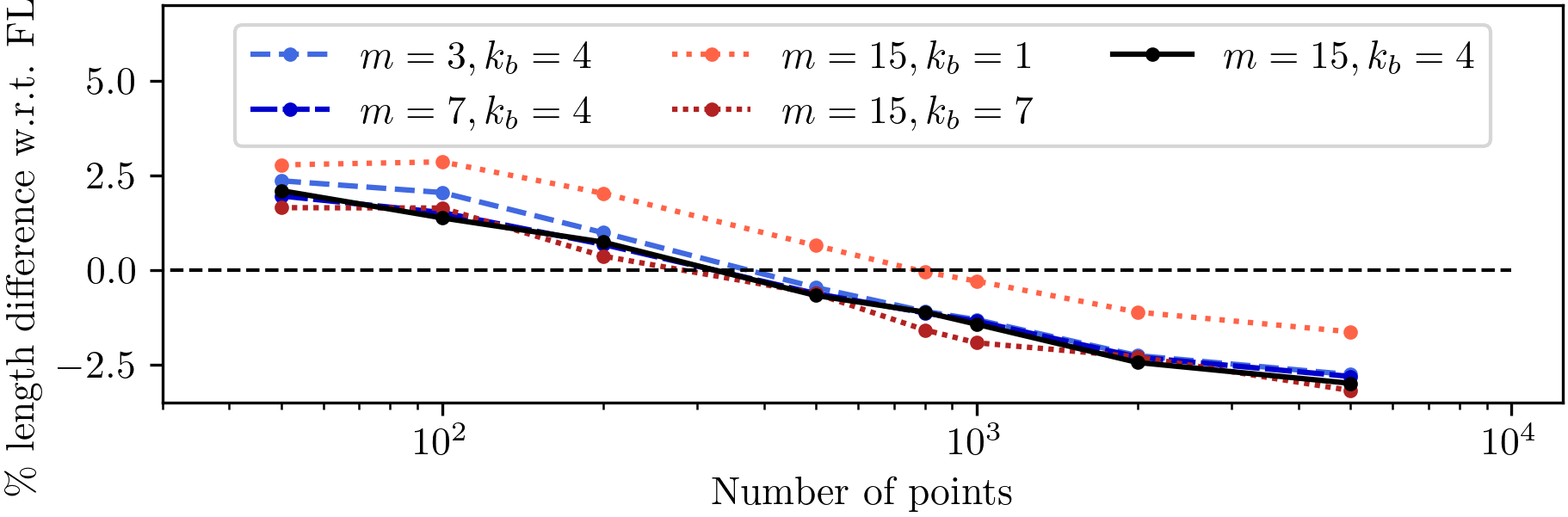}
    \caption{\small Dependence of \NNSteiner{} on $k_b$ and $m$.}
    \label{fig:hyper}
\end{figure}

\section{Conclusion}
We present a mixed neural-algorithmic framework, \NNSteiner{}, for solving large-scale RSMT problems. This is the first neural architecture that has the capacity to solve RSMT approximately.
In experiments, our framework shows generalization and scalability, and outperforms both heuristic algorithms and 
machine learning baselines on large instances. 

Our ongoing research pursues the following directions. 
First, \AroraAlg{} can solve higher-dimensional RSMT problems; this motivates us to generalize our framework to 3D IC designs. 
Second, the methodology behind \NNSteiner{} can be extended to compute obstacle-avoiding \RSMT{}s, which again have important applications in VLSI design.
%



\section*{Acknowledgments}
This work is partially supported by NSF under grants CCF-2112665, CCF-2217033 and CCF-2310411, and by DARPA IDEA HR0011-18-2-0032. The authors thank Anastasios Sidiropoulos for helpful discussions during initial stages of this project. We also thank Qi Zhao for early explorations and inputs to the early writing of this paper.
\bibliography{reference}

\vfill
\cleardoublepage

 \appendix

\section{Proof of Theorem \ref{th:structure}}

Our proof of Theorem \ref{th:structure} follows a similar argument as the one in \cite{arora1998polynomial} for geometric TSP.  We first
state the following lemma, which we use later to argue that any Steiner tree can be converted to an $(m,r)$-light Steiner tree with bounded cost increase. 
\begin{lemma}[Patching Lemma]
Let $\ell$ be any line segment of length $s$ and $T$ be a Steiner tree. The segment $\ell$ could be crossed by $T$ an arbitrary number of times. We can modify $T$ to a new Steiner tree that crosses the segment $\ell$  at most once while increasing the cost of the tree by at most $s$. 

\end{lemma}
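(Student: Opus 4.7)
The plan is to augment $T$ with a subsegment of $\ell$ and then extract a spanning tree of the resulting graph: the added segment pays for the cost increase, while the crossings of $T$ with $\ell$ are absorbed into touches at newly inserted Steiner points. Let $q_1, \ldots, q_k$ denote the points (in order along $\ell$) where $T$ crosses $\ell$; if $k \le 1$ the lemma is immediate, so I may assume $k \ge 2$.

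First I would subdivide each crossing edge $e_i = (u_i, v_i)$ of $T$ by inserting $q_i$ as a Steiner vertex, replacing $e_i$ with the two subedges $(u_i, q_i)$ and $(q_i, v_i)$. This operation preserves total length and yields a tree $T_0$, of the same cost as $T$, whose edges touch $\ell$ only at the $q_i$'s and never cross $\ell$ transversally.

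Second, I would add to $T_0$ the subsegment of $\ell$ from $q_1$ to $q_k$, which has length at most $s$ and passes through every $q_i$. The resulting graph $G$ is connected and typically contains cycles (formed by the added subsegment together with the paths in $T_0$ joining consecutive $q_i$'s); its total edge length is at most the length of $T$ plus $s$, and every edge of $G$ either lies along $\ell$ or lies entirely on one closed side of $\ell$ with at most one endpoint on $\ell$. Extracting any spanning tree $T'$ of $G$ then gives a tree that spans $V$, has cost at most that of $T$ plus $s$, and inherits from $G$ the property of having zero transversal crossings of $\ell$, which is at most $1$ as required. Pruning any non-terminal Steiner leaves produces the desired Steiner tree.

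The main subtlety to address is degenerate configurations: crossings occurring at a vertex of $T$ (where the subdivision step is vacuous), edges of $T$ that overlap $\ell$ along a subsegment (which contribute no transversal crossings but complicate the counting), and endpoints of $\ell$ coinciding with vertices of $T$. These can all be regularized by a standard generic-position perturbation of $\ell$, after which the construction above applies verbatim, giving the bound on cost increase and on crossings simultaneously.
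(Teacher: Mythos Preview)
Your overall strategy—subdivide at the crossing points, add the connecting subsegment along $\ell$, and extract a spanning tree—is a natural alternative to the paper's approach and is close in spirit to Arora's original Patching Lemma argument. However, your key claim that ``extracting \emph{any} spanning tree $T'$ of $G$'' yields a tree with at most one crossing is not correct as stated.

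The problem is your notion of ``transversal crossing.'' After subdivision it is true that no single edge of $T_0$ passes from one open side of $\ell$ to the other, but the tree $T_0$ still crosses $\ell$ at every $q_i$: at each $q_i$ there is one incident edge on the left and one on the right, so the tree passes through $\ell$ there. In particular, $T_0$ itself is a valid spanning tree of $G$ (simply drop all of the newly added segment edges $q_iq_{i+1}$), and this spanning tree still has $k$ crossings in the sense required by the lemma and by its application in the Structure Theorem. So ``any'' spanning tree will not do; you must specify which edges are removed. Your perturbation remark at the end does not help here, since the degeneracy (edges lying on $\ell$) is introduced by your own construction, not by the input.

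One way to repair the argument is to force $T'$ to retain all of the segment edges and instead remove $k-1$ of the side edges $(u_i,q_i)$ or $(q_i,v_i)$, then argue that the result meets $\ell$ in a single connected piece that counts as one crossing. The paper avoids this bookkeeping with an explicit iterative construction: it processes $z_1,\ldots,z_k$ in order along $\ell$, at each step cutting the current tree at $z_i$ into two components and reconnecting the component not containing $z_{i+1}$ by a vertical edge from $z_i$ to $z_{i+1}$ placed on the appropriate \emph{side} of $\ell$ (not along $\ell$ itself). This keeps the crossing count under explicit control—exactly one crossing survives, at $z_k$—and the total added length telescopes to $|z_1-z_k|\le s$.
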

\begin{proof}
Without loss of generality, assume the segment $\ell$ in Lemma A.1 is vertical. 
Then, the present lemma is shown by removing crossings one by one in a top-down manner, 
as follows. 
Let $z_1, \ldots, z_k$ be the set of intersection points between $T$ and the line 
segment $\ell$, sorted by decreasing $y$-coordinate. 
Consider $z_1$; imagine ``cutting" the tree $T$ at $z_1$, which breaks $T$ into two connected components; one containing a ``left copy" $z_1^-$ of $z_1$ and the other
containing a ``right copy" $z_1^+$ of $z_1$. One of these two components, say the one connecting $z_1^-$, is disconnected to the components containing $z_2$: we thus simply connect $z_1^-$ to the ``left copy" $z_2^-$ by a vertical edge. The resulting tree $T^{(1)}$ is still a valid Steiner tree. 
We repeat this until we finish processing all crossing points other than the last one, $z_k$. Overall, the total length of extra (vertical) edges we add is at most $s$. In the end only one crossing point (i.e, $z_k$) remains. 
\end{proof} 

\begin{lemma}
Grid the bounding box by putting a sequence of vertical and horizontal lines at unit distance from one another. If $\ell$ is one of these lines and $T$ is a Steiner tree, denote the number of times that $T$ crosses $\ell$ as $t(T, \ell)$. Then we have that
\begin{equation}
    \sum_{\ell:\text{vertical}} t(T, \ell) + \sum_{\ell:\text{horizontal}} t(T, \ell) \leq 2 \cdot \mathrm{cost}(T)
\end{equation}
\label{lm:crossing}
\end{lemma}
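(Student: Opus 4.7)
The plan is to bound the contribution of each individual edge of $T$ to the total crossing count on the unit grid, and then sum over all edges. The key structural observation is that since $T$ is a rectilinear Steiner tree, each of its edges is axis-parallel, so a horizontal edge can only cross vertical grid lines (and a vertical edge can only cross horizontal grid lines). For a horizontal edge $e$ with length $|e|$ and $x$-coordinates $x_1 < x_2$, the vertical grid lines $x = k$ that it can meet are exactly the integers with $x_1 \leq k \leq x_2$, giving at most $\lfloor x_2 - x_1 \rfloor + 1 \leq |e| + 1$ such lines. The symmetric bound holds for vertical edges against horizontal grid lines.

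To upgrade $|e|+1$ to $2|e|$, I would invoke the standard reduction in Arora's setup: after the integer-coordinate perturbation rescaling, input points lie on the integer grid, and by Hanan's theorem we may assume all Steiner points of $T$ lie on the Hanan grid. Hence every edge of $T$ has integer length at least $1$, so $|e| + 1 \leq 2|e|$ for every edge $e$. Summing over all edges, and using that horizontal edges contribute only to vertical-line crossings and vice versa, yields
\[
\sum_{\ell:\text{vertical}} t(T, \ell) \;+\; \sum_{\ell:\text{horizontal}} t(T, \ell) \;\leq\; \sum_{e \in E(T)} \bigl(|e|+1\bigr) \;\leq\; 2 \sum_{e \in E(T)} |e| \;=\; 2 \cdot \mathrm{cost}(T).
\]

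The main subtle point is handling edges shorter than $1$, which would break the inequality $|e|+1 \leq 2|e|$. This is resolved by the Hanan-grid reduction described above; alternatively, without invoking Hanan's theorem one can first contract any degree-$2$ Steiner point whose two incident edges are collinear (which leaves both $\mathrm{cost}(T)$ and every $t(T,\ell)$ unchanged), and then observe that remaining edges join two distinct points of the integer grid and therefore have length at least $1$. Either way, the per-edge bound propagates to the claimed inequality, and this is the only place where care is needed; the rest of the argument is just a clean charging of crossings to edge lengths.
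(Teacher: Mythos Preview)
Your approach is exactly the paper's: charge crossings to individual edges, with each edge of length $s$ contributing at most $O(s)$ to the sum. The paper's own proof is literally that one sentence and never pins down the constant $2$; you go further and obtain the constant via the Hanan-grid reduction, which is correct for the optimal tree $T^*$ (the only tree to which the lemma is ever applied downstream).

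Your proposed alternative, however, does not work as stated. Contracting degree-$2$ collinear Steiner points leaves intact any Steiner point of degree $\ge 3$ or any degree-$2$ Steiner point at a bend, and nothing forces those to have integer coordinates in an \emph{arbitrary} rectilinear Steiner tree; so the conclusion ``remaining edges join two distinct points of the integer grid'' is unjustified. In fact the lemma with constant $2$ is false for arbitrary rectilinear Steiner trees on integer terminals: take terminals $(0,0)$ and $(2,0)$, a vertical spine at $x = 1-\epsilon$ of total height $k\delta$, $k{+}1$ horizontal teeth of length $2\epsilon$ crossing the line $x=1$, and the two horizontal connectors to the terminals. As $\epsilon,\delta \to 0$ the cost tends to $2$ while the tree crosses $x=1$ exactly $k{+}1$ times, so the crossing-to-cost ratio is unbounded. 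Hence the Hanan argument (restricting to the optimal tree) is not just convenient but necessary; with it, your proof is complete and matches the paper's intended argument with the constant worked out.
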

Lemma A.2 follows immediately from the fact that
an edge of $T$ with length $s$ can contribute at most $O(s)$ to the left hand side. 

Now let $T^*$ be a rectilinear Steiner minimum tree and suppose $(a, b)$ 
is picked randomly. 
We show how to convert $T^*$ to an $(m,r)$-light Steiner tree without increasing cost too much, via the following tree-transformation procedure.

First, similar to \cite{arora1998polynomial}, given any grid line $\ell$, we say $\ell$ has \emph{level $i$} in the shifted grid if it contains an edge of some level-$i$ square.
Note that a level-$i$ line is touched by $2^{i + 1}$ level-$(i + 1)$ cells, which partition it into $2^{i + 1}$ segments of length $L / 2^{i + 1}$. In general, for each $j > i$, line $\ell$ is also touched by $2^j$ level-$j$ cells. We refer to the portion of $\ell$ that lies in a level-$j$ cell as a level-$j$ segment. Our goal is to reduce the number of crossings in each level-$i$ segment to $r$ or less.

An \emph{overloaded segment of $\ell$} is one that the Steiner tree crosses more than $r$ times. For every segment at level $\log L - 1$ that is overloaded, we apply the Patching Lemma and reduce the crossings to $1$. Then we proceed to level $\log L - 2$ and apply the Patching Lemma to all overloaded segments at this level. We continue this procedure until no segments are overloaded at level $i$ for all $i$ from $\log L - 1$ down to $1$. By construction, the resulting new Steiner tree $T_1^*$ can cross each quadtree cell boundary at most $r$ times. 

Next, for each grid line $\ell$, consider its \emph{highest-level} $i$, which is the {\bf smallest} level that this line is at (intuitively, the smaller the level $i$ is, the higher it is up the quadtree). 
We then move all crossings of $T_1^*$ to their nearest portals along $\ell$ at this level. We denote the resulting Steiner tree by $\widehat{T}^*$, which by construction is $(m,r)$-light. 

What remains is to bound the cost of $\widehat{T}^*$ w.r.t. the optimal cost $\mathsf{OPT} = cost(T^*)$. 
First, we bound the cost increase due to crossing-reduction step of the tree-transformation procedure. 
Without loss of generality, let us fix  a vertical grid line $\ell$. 
We refer to the boundary segment of any level-$j$ quadtree cell as a \emph{level-$j$ segment}. 
The aforementioned crossing-reduction procedure processes $\ell$ from level $\log L -1$ to $0$ till no segment of any level from $\ell$ is overloaded. 
Let $X_{\ell, j}(b)$ be a random variable denoting the number of overloaded level-$j$ segments encountered in this procedure. Note $X_{\ell,j}(b)$ is determined by vertical shift $b$ (chosen randomly from $[0, L]$), which determines the location of crossings on $l$. 
We claim that for every $b$, we have with probability 1
\begin{equation}
\sum_{j \geq 0}X_{\ell,j}(b) \leq \frac{t(T^*, \ell)}{r}.
\label{eq:cr}
\end{equation}
This is because the optimal input Steiner tree $T^*$ crossed grid line $\ell$ only $t(T^*, \ell)$ times, and each application of the Patching Lemma counted on the left hand side of Eqn.\ (\ref{eq:cr}) replaces at least $r + 1$ number of crossings by 1, thus eliminating at least $r$ crossings each time. 

Since a level-$j$ segment has length $L / 2^j$, the total extra cost incurred during the transformation procedure (as we described above) is at most $\sum_{j \geq 1}X_{\ell,j}(b) \frac{L}{2^{j}} $
by the Patching Lemma.

Now we want to bound the total cost-increase produced by all segments across all levels that grid line $\ell$ can generate. 
The actual cost increase at $\ell$ during the crossing-reduction process depends on the level of $\ell$, which is determined by horizontal shift $a$ (which was chosen randomly from $[0, L]$ during our random shift step). 
If the highest-level of $\ell$ is $i$, 
then the cost increase can be upper-bounded by $\sum_{j \geq i+1}X_{\ell,j}(b) \frac{L}{2^{j-1}}$. As $a$ is chosen randomly in $[0, L]$, the probability that $i$ is the highest-level of line $\ell$ is at most $2^{i + 1} / L$. 
Let $Y_{\ell,a}$ denote the total cost-increase due to changes to $\ell$ when the horizontal shift is $a$.  We can now bound the expectation of $Y_{\ell, a}$; in particular, for any vertical shift $b$: 
\begin{equation}
\begin{aligned}
    E_a[Y_{\ell,a}] & \leq \sum_{i \geq 1} \frac{2^{i + 1}}{L} \cdot \sum_{j \geq i + 1} X_{\ell,j}(b) \frac{L}{2^{j}} \\
    & =\sum_{j \geq 1} \frac{X_{\ell,j}(b)}{2^{j}} \sum_{i \leq j-1}2^i\\
    &\leq \sum_{j \geq 1} X_{\ell,j}(b) \\
    &\leq \frac{t(T^*,\ell)}{r}
\end{aligned}
\end{equation}
The expectation of cost-increase for all  vertical lines is therefore $E_a[\sum_{\ell~\text{is vertical}} Y_{\ell,a}] = \sum_{\ell~ \text{is vertical}} \frac{t(T^*,\ell)}{r}$. 
A symmetric argument can be used to bound the expected cost-increase for all horizontal lines by $\sum_{\ell~\text{is horizontal}} \frac{t(T^*,\ell)}{r}$. It then follows from Lemma \ref{lm:crossing} that the expected cost-increase incurred by all grid lines is bounded above by
\begin{equation}
    \sum_{\ell~ \text{is vertical}} \frac{t(T^*,\ell)}{r} + \sum_{\ell~ \text{is horizontal}} \frac{t(T^*,\ell)}{r} \leq \frac{2 \mathrm{cost}(T^*)}{r}.
\end{equation}

Finally, we also need to bound the cost increase when moving crossings to their nearest portals. If a grid line $\ell$ has highest-level $i$, the distance between each of the $t(T^*, \ell)$ number of crossings and its nearest portal is at most $\frac{L}{2^{i+1}m}$. Instead of actually moving a crossing to a portal, we break each edge at the crossing and add two line segments (on each side of $\ell$) connecting the portal to the origional crossing. Thus the expected increase for moving every crossing in $\ell$ to nearest portals can be bounded by $\sum_{i=1}^{\log L} \frac{2^i}{L} t(T^*, \ell) \cdot \frac{L}{2^{i + 1}m} \cdot 2 = \frac{t(T^*, \ell)\log L}{m}$.

Using Lemma \ref{lm:crossing}, the total expected cost increase for all crossings in all lines is bounded from above by
\begin{equation}
     \sum_{\ell} \frac{t(T^*, \ell) \log L}{m} \leq \frac{2\log L}{m} \mathrm{cost}(T^*)
\end{equation}

Denote by $T_{a,b,m,r}$ the Steiner tree obtained from $T^*$ by $(a,b)$-shift, followed by the aforementioned transformation procedure (consisting of crossing-reduction step, and the moving of crossings to portals). Putting everything together, the expected cost $E[\mathrm{cost}(T_{a,b,m,r})]$ can be bounded by: 
$E[\mathrm{cost}(T_{a,b,m,r})] \leq (1 + \frac{2}{r} + O(\frac{2\log L}{m}))\mathrm{cost}(T^*).$
Using Markov's inequality, with probability at least $1/2$, the cost of the best $(m,r)-$light Steiner tree for the shifted dissection is at most $(1+\frac{4}{r} + O(\frac{4\log L}{m}))-$OPT. This finishes the proof of Theorem \ref{th:structure}. 
\cleardoublepage
\onecolumn
\section{Additional experimental results}

In this section we give tables containing the data displayed in the figures of the main text and the corresponding runtimes for these experiments. 
All the runtimes are totals over 100 pointsets in seconds, and all other values are averages over 100 pointsets. 

\begin{table}[!h]
\small
\centering
\begin{tabular}{|l|r|r|r|r|r|r|r|r|}
\hline
Number of points & 50            & 100           & 200            & 500            & 800            & 1000           & 2000                   & 5000                   \\ \hline
NN-Steiner       & 9.23          & 19.82         & 36.68          & 95.97          & 147.53         & 186.65         & 383.45                 & 1021.00                \\ \hline
REST (T=8)        & 59.69         & 118.13        & 230.56         & 586.91         & 927.63         & 1174.98        & 2316.37                & 5707.48                \\ \hline
FLUTE (A=18)      & 3.68          & 6.30          & 13.85          & \textbf{42.23} & \textbf{74.65} & \textbf{97.25} & \textbf{218.35}        & \textbf{613.77}        \\ \hline
Geosteiner       & \textbf{0.83} & \textbf{2.74} & \textbf{10.27} & 86.32          & 534.99         & 5998.50        & --& -- \\ \hline
\end{tabular}
\caption{\small Comparison of runtimes in seconds for uniformly distributed pointsets, plotted in Fig.\ \ref{fig:runtime}. }
\end{table}

\begin{table}[!h]
\small
\centering
\begin{tabular}{|l|r|r|r|r|r|r|r|r|}
\hline
Number of points & 50   & 100   & 200   & 500   & 800    & 1000   & 2000   & 5000    \\ \hline
NN tree construction + refinement (\NNSteiner{})       & 9.23 & 19.82 & 36.68 & 95.97 & 147.53 & 186.65 & 383.45 & 1021.00 \\ \hline
MST of $V$ + refinement & 2.96 & 6.30  & 11.88 & 30.77 & 48.58  & 62.04  & 125.55 & 332.46  \\ \hline
NN tree construction + cell refinement  & 8.90 & 19.38 & 35.83 & 91.70 & 139.82 & 178.42 & 367.94 & 989.00  \\ \hline
NN tree construction + subtree refinement  & 8.80 & 19.06 & 35.30 & 91.79 & 137.49 & 175.91 & 362.50 & 958.87  \\ \hline
\end{tabular}
\caption{\small Ablation experiment runtimes in seconds for uniformly distributed pointsets. }
\end{table}

\begin{table}[!h]
\small
\centering
\begin{tabular}{|l|r|r|r|r|r|r|r|r|}
\hline
Number of points     & 50   & 100  & 200  & 500   & 800   & 1000  & 2000  & 5000                    \\ \hline
Uniform              & 2.10 & 1.38 & 0.74 & -0.67 & -1.11 & -1.43 & -2.44 & -2.99                   \\ \hline
Mixture              & 2.54 & 1.69 & 0.72 & -0.92 & -1.45 & -1.85 & -2.66 & \multicolumn{1}{l|}{--} \\ \hline
Non-isotropic normal & 1.89 & 1.32 & 0.67 & -0.70 & -1.23 & -1.57 & -2.36 & -3.07                   \\ \hline
\end{tabular}
\caption{\small Performance of \NNSteiner{} on different distributions (average percent length difference compared to FLUTE), plotted in Fig.\ \ref{fig:distribution}. }
\end{table}

\begin{table}[!h]
\small
\centering
\begin{tabular}{|l|r|r|r|r|r|r|r|r|}
\hline
Number of points     & 50    & 100   & 200   & 500    & 800    & 1000   & 2000   & 5000                    \\ \hline
Uniform              & 9.23  & 19.82 & 36.68 & 95.97 & 147.53 & 186.65 & 383.45 & 1021.00                 \\ \hline
Mixture              & 10.45 & 20.95 & 40.64 & 97.96 & 153.89 & 191.43 & 379.80 & \multicolumn{1}{l|}{--} \\ \hline
Non-isotropic normal & 9.45  & 19.74 & 38.95 & 95.31 & 151.64 & 187.16 & 378.56 & 993.56                  \\ \hline
\end{tabular}
\caption{\small Runtimes of \NNSteiner{} in seconds on different distributions. }
\end{table}

\begin{table}[!h]
\small
\centering
\begin{tabular}{|l|r|r|r|r|r|r|r|r|}
\hline
Number of points & 50   & 100  & 200  & 500   & 800   & 1000  & 2000  & 5000  \\ \hline
$m=15$      & 2.10 & 1.38 & 0.74 & -0.67 & -1.11 & -1.43 & -2.44 & -2.99 \\ \hline
$m=7$ & 1.96 & 1.52 & 0.68 & -0.62 & -1.13 & -1.35 & -2.30 & -2.81 \\ \hline
$m=3$ & 2.36 & 2.05 & 0.99 & -0.46 & -1.09 & -1.31 & -2.26 & -2.76 \\ \hline
\end{tabular}
\caption{\small Performance of \NNSteiner{} as a function of $m$ (average percent length difference compared to FLUTE), plotted in Fig.\ \ref{fig:hyper}.}
\end{table}

\begin{table}[!h]
\small
\centering
\begin{tabular}{|l|r|r|r|r|r|r|r|r|}
\hline
Number of points & 50   & 100   & 200   & 500   & 800    & 1000   & 2000   & 5000    \\ \hline
$m=15$      & 9.23 & 19.82 & 36.68 & 95.97 & 147.53 & 186.65 & 383.45 & 1021.00 \\ \hline
$m=7$ & 7.15 & 14.90 & 27.61 & 72.07 & 110.55 & 139.72 & 285.94 & 763.76  \\ \hline
$m=3$ & 5.77 & 12.38 & 22.72 & 58.81 & 90.34  & 115.50 & 235.79 & 630.96  \\ \hline
\end{tabular}
\caption{\small Runtimes of \NNSteiner{} in seconds as a function of $m$. }
\end{table}

\begin{table}[!h]
\small
\centering
\begin{tabular}{|l|r|r|r|r|r|r|r|r|}
\hline
Number of points & 50   & 100  & 200  & 500   & 800   & 1000  & 2000  & 5000  \\ \hline
$k_b=1$ & 2.78 & 2.86 & 2.03 & 0.65  & -0.05 & -0.29 & -1.11 & -1.63 \\ \hline
$k_b=4$       & 2.10 & 1.38 & 0.74 & -0.67 & -1.11 & -1.43 & -2.44 & -2.99 \\ \hline
$k_b=7$ & 1.65 & 1.64 & 0.37 & -0.62 & -1.58 & -1.92 & -2.29 & -3.18 \\ \hline
\end{tabular}\caption{\small Performance of \NNSteiner{} as a function of $k_b$ (average percent length difference compared to FLUTE), plotted in Fig.\ \ref{fig:hyper}.}
\end{table}

\clearpage
\begin{table}[h!]
\small
\centering
\begin{tabular}{|l|r|r|r|r|r|r|r|r|}
\hline
Number of points & 50    & 100   & 200    & 500    & 800    & 1000   & 2000    & 5000    \\ \hline
$k_b=1$ & 35.17 & 70.97 & 136.38 & 337.83 & 545.86 & 674.61 & 1386.92 & 3569.93 \\ \hline
$k_b=4$       & 9.23  & 19.82 & 36.68  & 95.97  & 147.53 & 186.65 & 383.45  & 1021.00 \\ \hline
$k_b=7$ & 5.89  & 11.52 & 24.33  & 62.90  & 97.79  & 114.05 & 251.53  & 577.57  \\ \hline
\end{tabular}
\caption{\small Runtimes of \NNSteiner{} in seconds as a function of $k_b$. }
\end{table}

\section{Threshold selection} \label{sec:threshold}
We conducted experiments evaluating performance of \NNSteiner{} across a range of thresholds and found that a threshold of .95 yields superior results. The threshold results are shown in Fig.\ \ref{fig:thresh} and Table \ref{tab: thresh}.
\begin{figure}[H] 
    \centering

    \includegraphics[width = 10 cm]{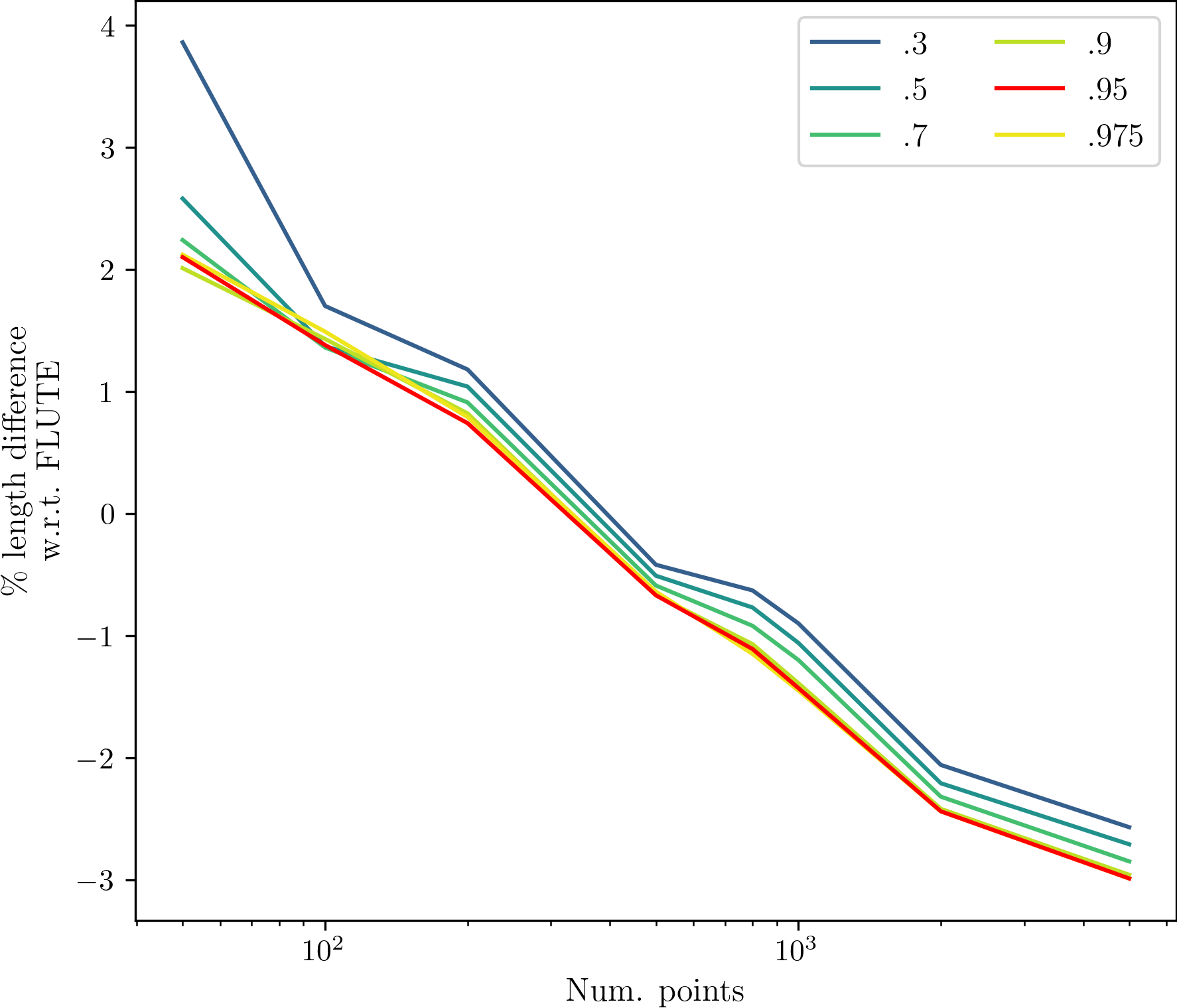}
        \caption{Dependence of \NNSteiner{} on threshold.}\label{fig:thresh}
\end{figure}

\begin{table}[H]
\centering
\begin{tabular}{|r|r|r|r|r|r|r|r|r|}
\hline
\multicolumn{1}{|l|}{Threshold \textbackslash \ Number of points} & 50   & 100  & 200  & 500   & 800   & 1000  & 2000  & 5000  \\ \hline
0.1                                                      & 5.48 & 2.46 & 1.78 & -0.05 & -0.15 & -0.50 & -1.75 & -2.26 \\ \hline
0.2                                                      & 5.04 & 2.01 & 1.46 & -0.33 & -0.45 & -0.74 & -1.94 & -2.45 \\ \hline
0.3                                                      & 3.86 & 1.70 & 1.18 & -0.42 & -0.63 & -0.90 & -2.06 & -2.57 \\ \hline
0.4                                                      & 2.99 & 1.44 & 1.16 & -0.48 & -0.70 & -0.99 & -2.14 & -2.65 \\ \hline
0.5                                                   & 2.58 & 1.37 & 1.04 & -0.51 & -0.77 & -1.06 & -2.21 & -2.71 \\ \hline
0.6                                                      & 2.37 & 1.34 & 0.95 & -0.54 & -0.85 & -1.12 & -2.26 & -2.78 \\ \hline
0.7                                                      & 2.24 & 1.36 & 0.91 & -0.59 & -0.92 & -1.20 & -2.32 & -2.85 \\ \hline
0.8                                                      & 1.98 & 1.39 & 0.84 & -0.63 & -0.97 & -1.30 & -2.37 & -2.91 \\ \hline
0.9                                                      & 2.01 & 1.43 & 0.82 & -0.67 & -1.07 & -1.39 & -2.42 & -2.96 \\ \hline
0.95                                                     & 2.10 & 1.38 & 0.74 & -0.67 & -1.11 & -1.43 & -2.44 & -2.99 \\ \hline
0.975                                                    & 2.12 & 1.49 & 0.79 & -0.64 & -1.15 & -1.45 & -2.44 & -2.99 \\ \hline
1                                                        & 2.53 & 2.44 & 1.36 & -0.02 & -0.78 & -1.08 & -1.78 & -2.49 \\ \hline
\end{tabular}
\caption{Dependence of \NNSteiner{} on threshold.} \label{tab: thresh}
\end{table}
\end{document}